\documentclass[final,12pt]{clear2026} 

\title[Learning Chain-Reaction Mechanisms from Interventions]{Causal Discovery in Action: Learning Chain-Reaction Mechanisms from Interventions}
\usepackage{times}
\usepackage{graphicx}
\usepackage[utf8]{inputenc} 
\usepackage[T1]{fontenc}    
\usepackage{hyperref}       
\usepackage{url}            
\usepackage{booktabs}       
\usepackage{amsfonts}       
\usepackage{nicefrac}       
\usepackage{microtype}      
\usepackage{xcolor}         
\usepackage{amssymb,amsmath}
\usepackage{mathtools} 
\usepackage{cleveref}
\usepackage{float}
\usepackage{natbib}
\usepackage{listings}
\usepackage{tikz}
\usepackage{algorithm}
\usepackage{algorithmic}
\usepackage{caption}
\usepackage{colortbl}
\usepackage{bbm}
\usepackage{subcaption}
\usepackage{float} 

\usetikzlibrary{automata, fit}
\usetikzlibrary{arrows.meta,positioning,calc}
\usetikzlibrary{decorations.pathreplacing}
\usetikzlibrary{decorations.markings}

\newcommand{\doop}[1]{\textit{\textbf{do}}(#1)}
\newcommand{\Pa}{\mathrm{Pa}}
\newcommand{\Desc}{\mathrm{Desc}}
\newcommand{\Anc}{\mathrm{Anc}}

\clearauthor{%
 \Name{Panayiotis Panayiotou} \Email{pp2024@bath.ac.uk}\\
 \addr Department of Computer Science, University of Bath, UK
 \AND
 \Name{{\"{O}}zg\"ur \c{S}im\c{s}ek} \Email{o.simsek@bath.ac.uk}\\
 \addr Department of Computer Science, University of Bath, UK%
}

\begin{document}

\maketitle
\thispagestyle{empty} 

\begin{abstract}%
Causal discovery is challenging in general dynamical systems because, without strong structural assumptions, the underlying causal graph may not be identifiable even from interventional data.
However, many real-world systems exhibit directional, cascade-like structure, in which components activate sequentially and upstream failures suppress downstream effects. We study causal discovery in such chain-reaction systems and show that the causal structure is uniquely identifiable from blocking interventions that prevent individual components from activating. We propose a minimal estimator with finite-sample guarantees, achieving exponential error decay and logarithmic sample complexity. Experiments on synthetic models and diverse chain-reaction environments demonstrate reliable recovery from a few interventions, while observational heuristics fail in regimes with delayed or overlapping causal effects.
\end{abstract}

\begin{keywords}%
    causal discovery,
    interventions,
    chain-reaction systems,
    mechanistic causal models
\end{keywords}

\section{Introduction}

Physical systems are composed of interacting components whose effects propagate through structured mechanisms. While causal discovery is challenging in general dynamical systems \cite{peters2017elements, runge2019inferring}, many engineered and natural systems exhibit strongly directional, cascade-like structure: components activate sequentially, and blocking an upstream component reliably suppresses all downstream effects. 
Such \emph{chain-reaction systems} arise in a wide range of settings, including mechanical safety interlocks and emergency-stop systems \citep{leveson2016engineering}, biological signaling and gene-regulatory cascades \citep{kauffman1969metabolic,shmulevich2002probabilistic}, relay and logic circuits, and software or infrastructure dependency graphs.
In these systems, the absence of an upstream activation propagates monotonically downstream: disabling a safety switch prevents all subsequent actuators from engaging, knocking out an upstream protein suppresses downstream gene expression, cutting power to a relay halts all dependent components, and removing a prerequisite service prevents dependent services from executing.

In this work, we study causal discovery in this restricted but practically relevant regime, and show that its strong mechanistic asymmetry enables simple and identifiable recovery of the underlying causal graph from blocking interventions. Specifically, we study \emph{chain-reaction systems} inspired by Rube Goldberg machines \citep{goldberg2013art}. These systems are intentionally constructed so that objects play \emph{asymmetric functional roles} in a directed cascade: a ball strikes a domino, the domino presses a button, the button releases a gate, and so on. See Figure~\ref{fig:intuition} for an illustrative example. While the underlying physical forces are symmetric at the level of Newtonian dynamics, the system itself is purposefully directional.

\begin{figure}[t]
\centering
\includegraphics[trim=40 110 240 40,clip,width=0.30\linewidth]{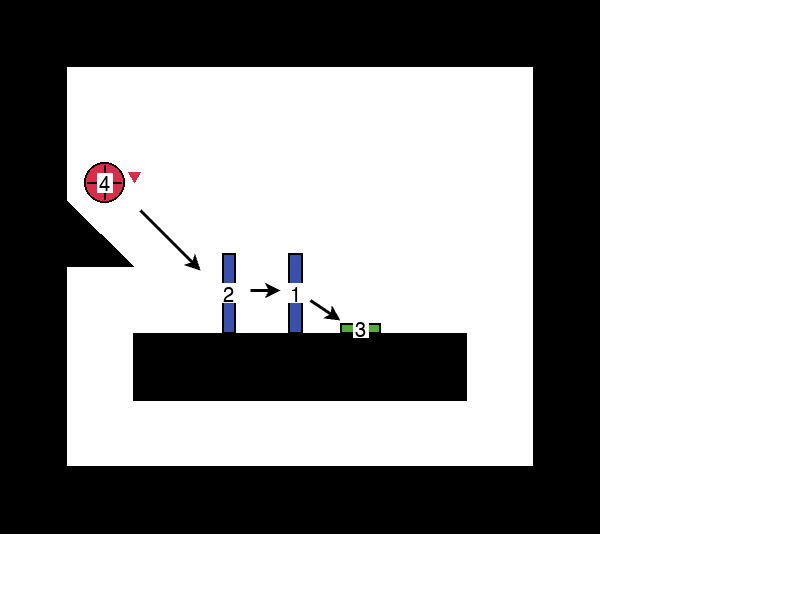}
\caption{
\textbf{Causal discovery in a chain-reaction system.}
Directed edges represent \emph{causal responsibility} rather than low-level physical forces.
Holding an object in place breaks the chain reaction and deterministically suppresses all downstream activations, revealing ancestor--descendant relations. 
}
\label{fig:intuition}
\end{figure}

A natural question is whether such causal structure can be inferred from observation alone. In a simple demonstration, the temporal order of object activations may reveal a clear causal chain, suggesting that tracking object motion or detecting collisions could suffice.
However, temporal order does not, in general, identify causation. 
Distinct causal graphs can induce identical activation time sequences when multiple objects activate simultaneously or when causal effects are delayed, for example when pressing a button releases a platform at a distance after some time.
Methods that track collisions have similar issues, since some interactions don't involve direct visible contact (such as a button press releasing a ball). More fundamentally, purely observational heuristics rely on strong assumptions about perception, such as accurately identifying collisions and attributing effects to the correct causes. Even the most sophisticated observational methods become brittle when multiple upstream objects provide equally plausible explanations for a downstream activation (e.g., two buttons pressed at the same time triggering different mechanisms). Such ambiguities cannot be resolved from observation alone and are precisely what causal interventions are designed to address \citep{eberhardt2006n, eberhardt2007interventions, pearl2009causality}.

We adopt a causal abstraction where each object is represented by a binary variable indicating whether it becomes \emph{active} (e.g., moves) during an episode, and a directed edge $i \!\to\! j$ indicates that object $i$ is causally responsible for triggering the activation of object $j$ as part of the chain reaction. The goal is to recover the directed chain-reaction graph from repeated interactions with the system.
We model interaction with the chain-reaction system as an interventional data collection process. In each execution (episode), the experimenter can apply a \emph{blocking intervention} that prevents a chosen object $i$ from activating while leaving the rest of the system intact, i.e., $\doop{X_i} = 0$. Intuitively, blocking an object ``breaks'' the chain reaction: downstream activations that depend on that object fail to activate, revealing which components are causally downstream.
Physical executions are imperfect, so we add small stochastic variations in the initial setup (slight displacements of objects when setting up the system), which can prevent the chain reaction from fully propagating, even in the absence of a blocking intervention (e.g., a domino may fall without triggering the next one). To model this stochasticity in a Structural Causal Model (SCM), we allow each variable to fail to activate with some probability even when all of its parents are active. This yields a simple SCM for chain reactions, enabling the recovery of the causal structure through blocking interventions.

We show that under this monotone cascade model, the causal structure of a chain-reaction mechanism is \emph{provably identifiable} from single-object blocking interventions (Theorem~\ref{thm:ident}). In particular, an object $j$ is a descendant (but not necessarily a child) of an intervened object $i$ if and only if $j$ is never observed to activate under $\doop{X_i = 0}$. Since we assume that each object has exactly one direct trigger, the causal graph is a directed tree, and it can be obtained via transitive reduction. We propose a simple finite-sample estimator and derive theoretical guarantees, including exponential error decay and logarithmic sample complexity in the number of objects (Theorem~\ref{thm:sample}).

We empirically validate the exponential error decay and logarithmic sample complexity using synthetic SCMs, where cascade failure probabilities can be directly and precisely controlled.
We then evaluate our method on a suite of chain-reaction environments exhibiting parallel, delayed, and intertwined interactions. Across all settings, our approach reliably and rapidly recovers the correct causal structure from a small number of interventional samples. In contrast, observational heuristics based on temporal order or collision detection fail precisely in regimes where causal ambiguity arises, such as near-simultaneous events.


\paragraph{Contributions.}
We summarize our contributions as follows:
\begin{enumerate}
    \item \textbf{Problem formulation.}
    We formalize causal discovery in chain-reaction systems via a simple causal abstraction: representing physical interactions with binary object-activation variables and a directed tree graph encoding \emph{causal responsibility}.

    \item \textbf{Structural identifiability from blocking interventions.}
    Under a monotone cascade structural causal model, we prove that the causal graph is uniquely identifiable from \emph{single-object blocking interventions}, with ancestor--descendant relations revealed directly by interventional activation probabilities.
    
    \item \textbf{Finite-sample estimation with theoretical guarantees.}
    We introduce a minimal estimator for recovering the graph under single-object blocking interventions and prove exponential error decay with logarithmic sample complexity in the number of objects.

    \item \textbf{Evaluation on synthetic data and chain-reaction environments.}
    We validate the theory using synthetic causal models and also demonstrate reliable structure recovery in diverse chain-reaction environments, where observational heuristics based on temporal order or contact fail due to delayed or overlapping causal effects.
\end{enumerate}

\section{Related Work}

\paragraph{Causal discovery with interventions.}
A central result in causal inference is that purely observational data are, in general, insufficient to completely identify causal structure, and that interventions can reduce or eliminate ambiguities \citep{pearl2009causality}.
%
Classical constraint-based and score-based causal discovery methods, such as PC and GES \citep{spirtes2000causation,chickering2002optimal}, are defined for observational data and identify a Markov equivalence class. When interventional data are available, targeted manipulations can shrink these equivalence classes and make additional edge directions identifiable \citep{eberhardt2006n,hauser2012characterization}.
Building on this insight, subsequent work has developed causal discovery methods that explicitly leverage interventional data\textemdash{}either with known or unknown intervention targets\textemdash{}to further orient edges beyond what is identifiable from observations alone \citep{wang2017permutation,mooij2020joint,brouillard2020differentiable}. These methods aim for broad applicability across large classes of structural causal models. As a result, they may yield partial identifiability or require substantial data to resolve ambiguity, depending on the intervention regime and modeling assumptions. 
In contrast, our work focuses on a restricted but meaningful class of systems in which blocking interventions induce deterministic cascade patterns. This structure enables simple, provably identifiable recovery of the full causal graph with explicit finite-sample guarantees.

\paragraph{Causal discovery under structural and invariance assumptions.}
Another line of work makes additional structural assumptions to make causal structure identifiable. Invariant causal prediction \citep{peters2015causal} exploits the stability of causal mechanisms across environments to identify causal parents, while other approaches leverage functional restrictions such as linearity with non-Gaussian noise \citep{shimizu2006linear}, additive noise models \citep{hoyer2008nonlinear}, or monotonicity and ordering assumptions to simplify structure learning. Our work follows this philosophy by imposing a simple and interpretable structural assumption on how failures and activations propagate in chain-reaction systems, enabling exact identifiability and an efficient estimator from interventional data.

\paragraph{Deterministic and cascade-style propagation models.}
Deterministic and near-deterministic propagation models have been studied in several related domains, including Boolean networks \citep{kauffman1969metabolic, shmulevich2002probabilistic}, reliability theory and fault-tree analysis \citep{haasl1981fault}, and models of failure propagation in engineered systems \citep{leveson2016engineering}.
These models capture systems in which downstream behavior is tightly constrained by upstream component states, often with stochastic noise at individual nodes.
While these models are not typically framed in the language of causal discovery, they share the key structural property that downstream behavior is constrained by upstream states.

\section{Problem Setup}

\subsection{Preliminaries}

A structural causal model consists of a set of variables $X = (X_1,\dots,X_N)$, a directed acyclic graph $G^\star$ encoding causal relationships,
and a collection of structural equations $X_j \coloneqq f_j(X_{\Pa(j)}, U_j)$, where $\Pa(j)$ denotes the parents of node $j$ in $G^\star$ and $U_j$ are exogenous noise variables.
An \emph{intervention} $\doop{X_i = x}$ replaces the structural equation for $X_i$ with $X_i \equiv x$, thereby cutting all incoming edges into $X_i$ while leaving the rest of the mechanisms unchanged \citep{pearl2009causality}.
In this work, we consider \emph{single-node interventions} and reason about causal effects through interventional distributions of the form $\Pr(X_j \mid \doop{X_i = x})$.
We write $\Desc(i)$ and $\Anc(i)$ for the sets of descendants and ancestors of node $i$ in $G^\star$, respectively.

\subsection{Chain-Reaction Causal Model}

We model a Rube Goldberg--style machine as a collection of $N$ objects, each associated with a binary random variable on each execution
\[
X_j \in \{0,1\}, \qquad j = 1,\dots,N,
\]
where $X_j = 1$ indicates that object $j$ becomes \emph{active} (e.g., moves, topples, or is pressed) at some point during the execution, and $X_j = 0$ indicates that it remains inactive.
The causal structure is represented by an unknown directed graph $G^\star = (V,E^\star)$ with $V = \{1,\dots,N\}$. A directed edge $i \to j$ means that object $i$ is \emph{responsible for triggering} the activation of object $j$ as part of the chain reaction.

We focus on systems that implement a directed cascade: each object has at most one direct trigger, but may trigger multiple downstream objects. Accordingly, we assume that $G^\star$ is a directed tree with a single root node. This abstraction captures the functional structure of Rube Goldberg machines, in which effects propagate along a designed sequence of components rather than through symmetric physical interactions.

\subsection{Monotone Cascade Structural Causal Model}

The activation variables evolve according to a \emph{monotone cascade} SCM.
For each node $j$, the structural equation is
\begin{equation}
X_j =
\begin{cases}
0, & \text{if } \exists\, p \in \Pa(j) \text{ with } X_p = 0, \\[4pt]
Z_j, & \text{otherwise},
\end{cases}
\label{eq:monotone}
\end{equation}
where $Z_j$ are exogenous Bernoulli noise variables with non-zero success probability.
Intuitively, $Z_j$ captures stochastic failures in physical execution, such as small misalignments when resetting the machine (e.g., imperfectly spaced dominoes or slightly displaced objects). Thus, an object may fail to activate spontaneously even if all required upstream triggers are active. However, if any required upstream trigger fails, activation of all downstream objects is deterministically prevented.

This SCM reflects key properties of the chain-reaction systems we consider:
(i) \emph{asymmetry of responsibility}, induced by the tree structure, where each object has a unique upstream trigger and causal influence flows in a well-defined upstream--downstream direction (ii) \emph{monotonicity}, since upstream activations can enable downstream activations, but never inhibit them, and (iii) \emph{cascade-style propagation}, since activations propagate sequentially along directed paths and blocking an upstream object deterministically suppresses all downstream activations.

\subsection{Interventions and Data Collection}\label{sec:dataset}

We observe the system through repeated executions.
In each execution $e$, we perform either (i) an observational run with no intervention, or (ii) a \emph{blocking intervention} on a single object.
A blocking intervention on object $i$ is modeled as \[\doop{X_i = 0}\] which prevents object $i$ from activating while leaving the rest of the system unchanged.
Operationally, this corresponds to physically holding an object in place and allowing the remainder of the machine to run. Under the monotone cascade model~\eqref{eq:monotone}, this intervention deterministically forces all descendants of $i$ to remain inactive, while non-descendants behave according to their observational distribution.

Formally, the interventional dataset consists of samples
\[
\{(I_e, X^{(e)})\}_{e=1}^M,
\]
where $I_e \in \{1,\dots,N\}$ denotes the intervened object in episode $e$
(and $I_e = \varnothing$ indicates no intervention),
and $X^{(e)} \in \{0,1\}^N$ is the observed activation vector at the end of the episode.
For example, for Figure~\ref{fig:intuition}, a possible dataset is $\{
(I_e=\varnothing,\, X^{(e)}=(1,1,1,1)),
(I_e=\varnothing,\, X^{(e)}=(0,1,0,1)),
(I_e=2,\, X^{(e)}=(0,0,0,1)),
(I_e=4,\, X^{(e)}=(0,0,0,0))
\}$ where blocking object $i$ deterministically suppresses all of its descendants.
A larger example is provided in Appendix~\ref{app:dataset}.

\subsection{Learning Objective}

Our goal is to recover the true causal graph $G^\star$ governing the chain-reaction system from interventional data. Given repeated executions under blocking interventions, we seek to identify the directed structure that encodes causal responsibility between objects.
Crucially, we do not assume access to temporal orderings, collision events,
or detailed physical state trajectories. All information is extracted from binary activation outcomes under interventions. In the following section, we show that under the monotone cascade model, single-object blocking interventions are sufficient to uniquely identify the full causal structure, and we provide a simple estimator with finite-sample guarantees.


\section{Method}\label{sec:method}

We now present our causal discovery method for chain-reaction systems.
We first establish identifiability of the causal structure and then describe a finite-sample estimator, the reconstruction algorithm, and provide theoretical guarantees. 

\subsection{Identifiability from Blocking Interventions}

For distinct objects $i \neq j$, define the \emph{interventional activation probability}
\begin{equation}
p_{ij} \coloneqq \Pr\!\bigl( X_j = 1 \mid \doop{X_i = 0} \bigr).
\label{eq:pij}
\end{equation}
Intuitively, $p_{ij}$ measures whether object $j$ can still become active
when object $i$ is blocked. The montone cascade model allows the following characterization of descendant relations.

\begin{lemma}[Deterministic cascade relation]\label{lem:cascade}
For any $i \neq j$,
\[
j \in \Desc(i)
\quad \Longleftrightarrow \quad
p_{ij} = 0.
\]
\end{lemma}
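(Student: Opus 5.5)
We prove the two directions separately, using only the structural equation~\eqref{eq:monotone}, the tree structure of $G^\star$, and the semantics of $\doop{X_i=0}$, which replaces the equation for $X_i$ by $X_i\equiv 0$ and leaves every other mechanism unchanged.

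\emph{($\Rightarrow$) Descendants are never active.} Suppose $j\in\Desc(i)$. Since $G^\star$ is a tree, there is a directed path $i=v_0\to v_1\to\dots\to v_k=j$ with $k\ge 1$. We induct along this path. Under the intervention $X_{v_0}=0$ surely. If $X_{v_{m-1}}=0$ almost surely, then $v_{m-1}\in\Pa(v_m)$ and the first case of~\eqref{eq:monotone} forces $X_{v_m}=0$ almost surely. The mechanism for $v_m$ is untouched because $v_m\neq i$; this holds since $G^\star$ is acyclic and so $i$ cannot reappear on the path. Hence $X_j=0$ almost surely and $p_{ij}=0$.

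\emph{($\Leftarrow$) Non-descendants are active with positive probability.} We prove the contrapositive: if $j\notin\Desc(i)$ and $j\neq i$, then $p_{ij}>0$. Let $r=u_0\to u_1\to\dots\to u_\ell=j$ be the unique path from the root $r$ to $j$, which exists because $G^\star$ is a rooted tree. The key observation is that $i$ does not lie on this path. Indeed, if $i=u_m$ for some $m$, then $j$ would be a descendant of $i$, since $m<\ell$ because $i\neq j$.

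Now consider the event $E=\{Z_{u_0}=\dots=Z_{u_\ell}=1\}$. Under $\doop{X_i=0}$, every $u_m$ keeps its original mechanism, so we can induct along the path. The root $u_0$ has no parents, so $X_{u_0}=Z_{u_0}$. Each $u_m$ has unique parent $u_{m-1}$, so $X_{u_m}=Z_{u_m}$ whenever $X_{u_{m-1}}=1$. On $E$ this gives $X_j=1$. By independence of the exogenous noises, $\Pr(E)=\prod_m \Pr(Z_{u_m}=1)>0$, since each success probability is nonzero. Hence $p_{ij}\ge\Pr(E)>0$.

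\emph{Main obstacle.} The only delicate point is the converse direction. It relies on the intervention not disturbing the root-to-$j$ path, which depends on the tree/unique-parent assumption and on the convention that the root's equation reduces to $X_r=Z_r$. It also needs the noises $Z$ to be mutually independent, or at least to have positive joint probability of all being $1$ along the path. We make these standing assumptions explicit; with them, both directions are short inductions along a directed path.
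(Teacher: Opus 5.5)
Your proof is correct and follows the same two-part structure as the paper's. The block at $i$ propagates down the directed path $i \rightsquigarrow j$ via the first case of~\eqref{eq:monotone}, and for the converse $\doop{X_i=0}$ leaves every mechanism feeding $X_j$ untouched. Your converse is also more careful than the paper's, which cites only $\Pr(Z_j=1)>0$ and skips the requirement that $j$'s ancestors activate too. Your event that all $Z_{u_m}$ on the root-to-$j$ path equal $1$ closes that step, together with the independence of the exogenous noises that the paper assumes only implicitly.
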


\begin{proof}
If $j$ is a descendant of $i$, then blocking $i$ forces every object on every directed path
$i \rightsquigarrow j$ to remain inactive under~\eqref{eq:monotone},
hence $X_j = 0$ and $p_{ij}=0$.
Conversely, if $j$ is not a descendant of $i$, then the structural equation for $X_j$
does not depend on $X_i$.
Since $Z_j$ has non-zero success probability, $X_j = 1$ remains possible under $\doop{X_i = 0}$,
implying $p_{ij} > 0$.
\end{proof}


Lemma~\ref{lem:cascade} shows that single-object blocking interventions reveal the full
ancestor--descendant relation.
We now define the \emph{ancestor matrix}
\[
A(i,j) \coloneqq \mathbbm{1}\{ p_{ij} = 0 \}.
\]

Because the true causal graph $G^\star$ is a directed tree, each node has a unique parent / closest ancestor. Once all ancestor--descendant relations are identified, the direct triggering structure (the directed tree) can be recovered by \emph{transitive reduction}.

\begin{theorem}[Identifiability]\label{thm:ident}
Under the monotone cascade model and single-object blocking interventions,
the true causal tree $G^\star$ is uniquely identifiable from the interventional
activation probabilities $\{p_{ij}\}_{i,j}$.
\end{theorem}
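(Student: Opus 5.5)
The argument has two stages. First, Lemma~\ref{lem:cascade} turns the interventional activation probabilities into the descendant relation. Second, the tree structure pins down the edges of $G^\star$ from that relation alone.

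\textbf{Step 1: the descendant relation.} By Lemma~\ref{lem:cascade}, for every ordered pair $i\neq j$ we have $j\in\Desc(i)$ if and only if $p_{ij}=0$. So the ancestor matrix $A$ is a function of $\{p_{ij}\}$. Consequently, any two monotone cascade models on trees $G_1,G_2$ that induce the same $\{p_{ij}\}$ have identical descendant relations: $\Desc_{G_1}(i)=\Desc_{G_2}(i)$ for all $i$.

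\textbf{Step 2: a rooted tree is determined by its descendant relation.} Here I will show that the unique parent of each non-root node can be read off from $A$. Fix a non-root node $j$. In a rooted directed tree, $\Anc(j)$ is exactly the vertex set of the unique root-to-$j$ path, excluding $j$. It is therefore totally ordered by the ancestor relation, and the parent of $j$ is its maximal element.

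Concretely, $\Pa(j)$ is the unique $i\in\Anc(j)$ such that no $k$ satisfies $i\in\Anc(k)$ and $k\in\Anc(j)$. In terms of $A$, this is the unique $i$ with $A(i,j)=1$ for which there is no $k$ with $A(i,k)=A(k,j)=1$.

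Two facts need checking:
\begin{itemize}
\item The parent satisfies this condition. If some $k$ were strictly between $\Pa(j)$ and $j$, the path from $\Pa(j)$ to $j$ would have length at least $2$, contradicting the single edge $\Pa(j)\to j$ and the uniqueness of paths in a tree.
\item No other ancestor satisfies it. Any ancestor $i\neq \Pa(j)$ lies on the path above $\Pa(j)$, so $k=\Pa(j)$ witnesses $A(i,k)=A(k,j)=1$.
\end{itemize}

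The root is identified as the unique node with no ancestors, i.e.\ the unique column of $A$ that is all zeros. The edge set is thus
\[
E^\star=\{(i,j): A(i,j)=1,\ \nexists k \text{ with } A(i,k)=A(k,j)=1\},
\]
which is the transitive reduction of the relation $A$. Hence $E^\star$ is a function of $A$, and so of $\{p_{ij}\}$.

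\textbf{Main obstacle.} The one subtle point is making sure that the transitive reduction of the descendant relation returns exactly the tree edges. In a general DAG this holds too, since the transitive reduction of the reachability relation of a DAG is unique and equals the set of non-redundant edges. A general DAG, however, may contain a redundant edge $i\to j$ that duplicates a longer path, and such an edge would be invisible in $A$. The tree assumption rules this out, because every edge is the only path between its endpoints. I will state this explicitly, since it is exactly where the single-trigger assumption is used.

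Combining Steps 1 and 2: if $G_1\neq G_2$, their edge sets differ, so their descendant relations differ, so by Lemma~\ref{lem:cascade} the induced probabilities $\{p_{ij}\}$ differ. This gives uniqueness.
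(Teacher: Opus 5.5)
Your proposal is correct and follows the same route as the paper: Lemma~\ref{lem:cascade} turns the zero pattern of $\{p_{ij}\}$ into the ancestor matrix $A$, and the transitive reduction of a rooted tree's descendant relation returns exactly the tree edges. The only difference is that you prove this last step directly (the parent of $j$ is the unique ancestor with no ancestor of $j$ strictly between them) instead of citing Aho et al., which also makes explicit where the single-trigger assumption is used.
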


\begin{proof}
Lemma~\ref{lem:cascade} shows that the interventional probabilities $\{p_{ij}\}$ uniquely determine the ancestor--descendant relation. 
Since the true causal graph $G^\star$ is a directed tree, this relation admits a unique transitive reduction, which coincides with $G^\star$ \citep{aho1972transitive}.
\end{proof}


\subsection{Estimation from a Finite Interventional Dataset}

Let $n_i$ denote the number of executions in which object $i$ is blocked.
For each ordered pair $(i,j)$, we estimate the interventional activation probability $p_{ij}$ by the empirical frequency
\begin{equation}
\hat p_{ij}
\;=\;
\frac{1}{n_i}
\sum_{e: I_e = i}
\mathbbm{1}\{ X_j^{(e)} = 1 \}.
\label{eq:phat}
\end{equation}

Under the monotone cascade model, descendants of $i$ are deterministically inactive under $\doop{X_i = 0}$. Therefore, we classify $j$ as a descendant of $i$ whenever $\hat p_{ij} = 0$. Conversely, if object $j$ is observed to activate at least once when $i$ is blocked, we conclude that $j \notin \Desc(i)$. This yields an empirical ancestor matrix
\[
\hat A(i,j) \coloneqq \mathbbm{1}\{ \hat p_{ij} = 0 \}.
\]

\paragraph{Behavior in the low-sample regime.}
When the number of interventions $n_i$ is small, it is possible that $\hat p_{ij}=0$ for many non-descendant pairs. In this regime, the estimator will overestimate the ancestor relation. However, note that the resulting errors are one-sided: true descendants are never misclassified as non-descendants under the monotone cascade model, and false positives vanish as $n_i$ grows. As our theoretical guarantees show in Section~\ref{sec:theoretical}, the probability of such spurious ancestor relations decays exponentially in $n_i$, and the true ancestor matrix is recovered with high probability once $n_i$ exceeds a logarithmic threshold.



\subsection{Reconstruction Algorithm}

Given the estimated ancestor matrix $\hat A$, we reconstruct the causal graph by enforcing acyclicity and computing a transitive reduction.
In the low-sample regime, we can have spurious ancestor relations including 2-cycles when both $\hat p_{ij}$ and $\hat p_{ji}$ equal zero.
If observational episodes are available, such cycles can sometimes be pruned using the monotone implication that $j \in \Desc(i)$ would forbid observing $X_j = 1$ while $X_i = 0$, when such events are observed under stochastic failures.
Any remaining cycles are broken deterministically (e.g., by index) to obtain a DAG prior to transitive reduction. These heuristics are not required for identifiability and only affect the low-sample regime. As the number of interventions grows, spurious relations vanish exponentially (Theorem~\ref{thm:sample}). Algorithm~\ref{alg:tree} summarizes the reconstruction.

\begin{algorithm}[t]
\caption{Cascade Tree Reconstruction}
\label{alg:tree}
\begin{algorithmic}[1]
\REQUIRE Interventional dataset $\{(I_e, X^{(e)})\}_{e=1}^M$
\STATE Compute empirical probabilities $\hat p_{ij}$ using~\eqref{eq:phat}
\STATE Construct ancestor matrix $\hat A(i,j) = \mathbbm{1}\{\hat p_{ij} = 0\}$
\STATE Enforce acyclicity of $\hat A$ by breaking directed cycles
\STATE Compute the transitive reduction of $\hat A$
\STATE \textbf{Return} reconstructed causal graph $\hat G$
\end{algorithmic}
\end{algorithm}


\subsection{Theoretical Guarantees}\label{sec:theoretical}

We now quantify the finite-sample behavior of the estimator.
Define the smallest non-descendant activation probability as follows
\[
q_{\min}
\;\coloneqq\;
\min_{i \neq j:\, j \notin \Desc(i)}
\Pr\!\bigl( X_j = 1 \mid \doop{X_i = 0} \bigr),
\qquad q_{\min} > 0,
\]

\begin{theorem}[Sample Complexity]\label{thm:sample}
If $j \notin \Desc(i)$, then
\begin{equation}
\Pr\!\bigl( \hat A(i,j) = 1 \bigr)
\;\le\;
\exp(- q_{\min} n_i).
\label{eq:expbound}
\end{equation}
That is, the probability of a false positive ancestor relation
decays exponentially in the number of interventions $n_i$. Moreover,
\begin{equation}\label{eq:probcorrect}
\Pr(\hat A = A)
\;\ge\;
1 - N(N-1)\, \exp(- q_{\min} n_{\min}),
\end{equation}
where $n_{\min} = \min_i n_i$.
Thus, if
\begin{equation*}\label{eq:samplebound}
n_{\min}
\;\ge\;
\frac{1}{q_{\min}}
\Bigl( \log(N (N - 1)) + \log \tfrac{1}{\delta} \Bigr),
\end{equation*}
then $\hat A = A$ with probability at least $1-\delta$.
\end{theorem}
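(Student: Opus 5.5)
The plan is to treat the $n_i$ executions with $I_e = i$ as i.i.d.\ draws from the interventional distribution under $\doop{X_i = 0}$. I take this independence as the natural reading of the data-collection model in Section~\ref{sec:dataset}: each episode resets the machine and draws fresh exogenous noise $Z$. Under that reading, the indicators $\mathbbm{1}\{X_j^{(e)} = 1\}$ for these episodes are i.i.d.\ Bernoulli$(p_{ij})$.

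For a fixed pair with $j \notin \Desc(i)$, the event $\hat A(i,j) = 1$ is exactly $\hat p_{ij} = 0$, that is, $X_j^{(e)} = 0$ in every one of the $n_i$ blocked episodes. Its probability is therefore
\[
(1 - p_{ij})^{n_i} \le (1 - q_{\min})^{n_i} \le \exp(-q_{\min} n_i).
\]
The first inequality holds because $p_{ij} \ge q_{\min}$ by definition of $q_{\min}$; positivity of $q_{\min}$ comes from Lemma~\ref{lem:cascade}. The second uses $1 - x \le e^{-x}$. This gives~\eqref{eq:expbound}.

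For the global statement~\eqref{eq:probcorrect}, I first observe that errors are one-sided. If $j \in \Desc(i)$, Lemma~\ref{lem:cascade} gives $p_{ij} = 0$, so $X_j^{(e)} = 0$ almost surely in every blocked episode. Hence $\hat p_{ij} = 0$ and $\hat A(i,j) = A(i,j) = 1$ almost surely. Consequently, $\{\hat A \ne A\}$ is contained in the union, over ordered pairs $i \ne j$ with $j \notin \Desc(i)$, of the events $\{\hat A(i,j) = 1\}$. There are at most $N(N-1)$ such pairs, and each event has probability at most $\exp(-q_{\min} n_i) \le \exp(-q_{\min} n_{\min})$. A union bound then yields~\eqref{eq:probcorrect}; diagonal entries are not part of the estimated relation, so they contribute nothing. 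Finally, the sample-size condition follows by requiring $N(N-1)\exp(-q_{\min} n_{\min}) \le \delta$, taking logarithms, and solving for $n_{\min}$.

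There is no real obstacle here; the argument is an elementary union bound. The one step that needs care is justifying independence across episodes, and making sure that per-pair dependence within a single episode (different $j$ under the same $i$) does not matter. It does not, because the union bound requires no independence between events. I would state the i.i.d.\ assumption explicitly, and note that adaptive choice of the $n_i$ is excluded, or else handled by conditioning on the $n_i$.
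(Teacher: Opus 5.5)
Your proposal is correct and follows the paper's proof step for step: the per-pair bound $(1-p_{ij})^{n_i}\le e^{-q_{\min}n_i}$ from i.i.d.\ Bernoulli trials (the paper applies $1-x\le e^{-x}$ before replacing $p_{ij}$ by $q_{\min}$, you do it in the other order), one-sidedness of errors via Lemma~\ref{lem:cascade}, a union bound over at most $N(N-1)$ non-descendant pairs using $n_i\ge n_{\min}$, and solving for $n_{\min}$. Your explicit statement of the i.i.d.\ assumption across episodes, and your remark that the union bound needs no independence across different $j$, are sensible clarifications that the paper leaves implicit.
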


We provide the proof in Appendix~\ref{app:proofs}.

\begin{corollary}[Exact Tree Recovery]
If $\hat A = A$, then the transitive reduction of $\hat A$ equals $G^\star$.
In particular,
\[
\Pr(\hat G = G^\star) \;\ge\; 1 - \delta.
\]
\end{corollary}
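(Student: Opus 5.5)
The statement to prove is the Corollary (Exact Tree Recovery). The proof has two parts: a deterministic implication, and then a probability bound inherited from Theorem~\ref{thm:sample}.

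\textbf{Deterministic part.} Assume $\hat A = A$. By Lemma~\ref{lem:cascade}, $A(i,j)=1$ exactly when $j \in \Desc(i)$. So $A$ is the adjacency matrix of the strict ancestor relation of $G^\star$, which is the transitive closure of the tree.

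Two properties of this relation matter. First, it is acyclic, because $G^\star$ is a DAG. So the cycle-breaking step of Algorithm~\ref{alg:tree} finds no directed cycle and leaves $\hat A$ unchanged. I will state this explicitly, since that step is the only place a deterministic tie-breaking rule could alter the input.

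Second, the transitive reduction of the closure of a DAG is unique and equals the DAG's minimal edge set. This is the result of \citet{aho1972transitive} already used in Theorem~\ref{thm:ident}. For a tree there is also a direct check. An edge $i\to j$ survives the reduction iff there is no $k$ with $j\in\Desc(k)$ and $k\in\Desc(i)$. In a rooted tree, the ancestors of $j$ form a chain from the root down to $j$. The unique ancestor with no other ancestor of $j$ strictly between it and $j$ is $\Pa(j)$. Hence the retained edges are exactly $\Pa(j)\to j$, and $\hat G = G^\star$.

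\textbf{Probabilistic part.} The deterministic part shows that the event $\{\hat A = A\}$ is contained in $\{\hat G = G^\star\}$. Therefore
\[
\Pr(\hat G = G^\star) \ge \Pr(\hat A = A) \ge 1 - N(N-1)\exp(-q_{\min} n_{\min})
\]
by \eqref{eq:probcorrect}. Under the sample-size condition of Theorem~\ref{thm:sample}, the right-hand side is at least $1-\delta$.

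The only potential obstacle is making sure the acyclicity-enforcement heuristic is a no-op on the true ancestor matrix. It is, since $A$ is already acyclic. Everything else follows directly from results stated earlier.
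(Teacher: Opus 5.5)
Your proposal is correct and takes essentially the paper's route. The paper gives no separate proof: the corollary follows from the transitive-reduction uniqueness argument of Theorem~\ref{thm:ident} together with the bound \eqref{eq:probcorrect}, via the inclusion $\{\hat A = A\}\subseteq\{\hat G = G^\star\}$. Your extra points are sound details the paper leaves implicit: the cycle-breaking step does nothing to the acyclic true ancestor matrix, and in a rooted tree the only edges surviving reduction are $\Pa(j)\to j$.
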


Inequality~\eqref{eq:expbound} quantifies how quickly spurious ancestor relations ($j \notin \Desc(i)$) disappear as more interventions are collected.
For a fixed non-descendant pair $(i,j)$, the only way $j$ can be incorrectly classified as downstream of $i$ is if it never activates in any of the $n_i$ trials under $\doop{X_i=0}$.
Since $j$ activates with probability at least $q_{\min}$ in each such trial, this event becomes exponentially unlikely in $n_i$.
Inequality~\eqref{eq:probcorrect} shows that, once each object is blocked a logarithmic number of times in the number of objects, with high probability no spurious ancestor relations remain and the estimated and true ancestor matrices coincide.





\section{Experiments}\label{sec:experiments}

We evaluate our method with two goals:
(i) to empirically validate the finite-sample guarantees established in Section~\ref{sec:theoretical},
and (ii) to demonstrate reliable causal structure recovery in chain-reaction environments inspired by Rube Goldberg machines. All experiments\footnote{Code is available at \url{https://github.com/panispani/chain-reaction-causal-discovery}} were conducted on a shared CPU server (AMD EPYC 9454, 96 cores, 192 threads, 1.5~TB RAM).
Complete experimental results are reported in Appendix~\ref{app:results}; here we present the most informative findings.

\subsection{Environments}

We evaluate on six chain-reaction environments (Figure~\ref{fig:envs}), each consisting of interacting objects simulated with the Pymunk physics engine, together with buttons that trigger delayed, non-contact effects by opening platforms (i.e., removing wall segments). To model imperfect setup and physical variability, we introduce a displacement parameter $\Delta$: at the start of each episode, the position of every object is independently perturbed by a uniform displacement in $[-\Delta,\Delta]$. These perturbations induce stochastic execution failures and correspond to the exogenous noise variables $Z_j$ in the monotone cascade SCM.

Each episode consists of a single \emph{blocking intervention}. Interventions are performed in rounds: in each round, every object is intervened on exactly once, in a random order. The intervened object is held fixed by disabling its physics interactions, while the rest of the system evolves unchanged. 
For each episode, we record only the final binary activation vector indicating which objects became active. From observation alone, the causal structure of these environments is generally ambiguous. For example, in the Slot-machine and Parallel Trigger environments, multiple buttons may be pressed simultaneously, releasing different downstream objects at the same time. Appendix~\ref{app:envs} provides worked-out execution rollouts illustrating these mechanisms.




\begin{figure}[t]
  \centering
  \footnotesize

  \subfigure[\textbf{Minimal Chain}]{%
    \includegraphics[
      width=0.32\linewidth,
      trim=0 40 200 0,
      clip
    ]{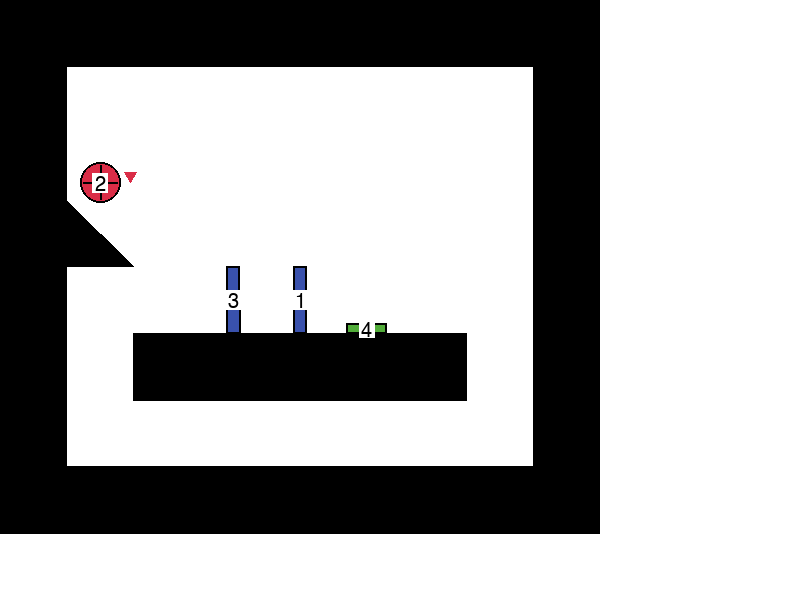}
  }\hfill
  \subfigure[\textbf{Sequential Chain}]{%
    \includegraphics[
      width=0.32\linewidth,
      trim=0 40 480 0,
      clip
    ]{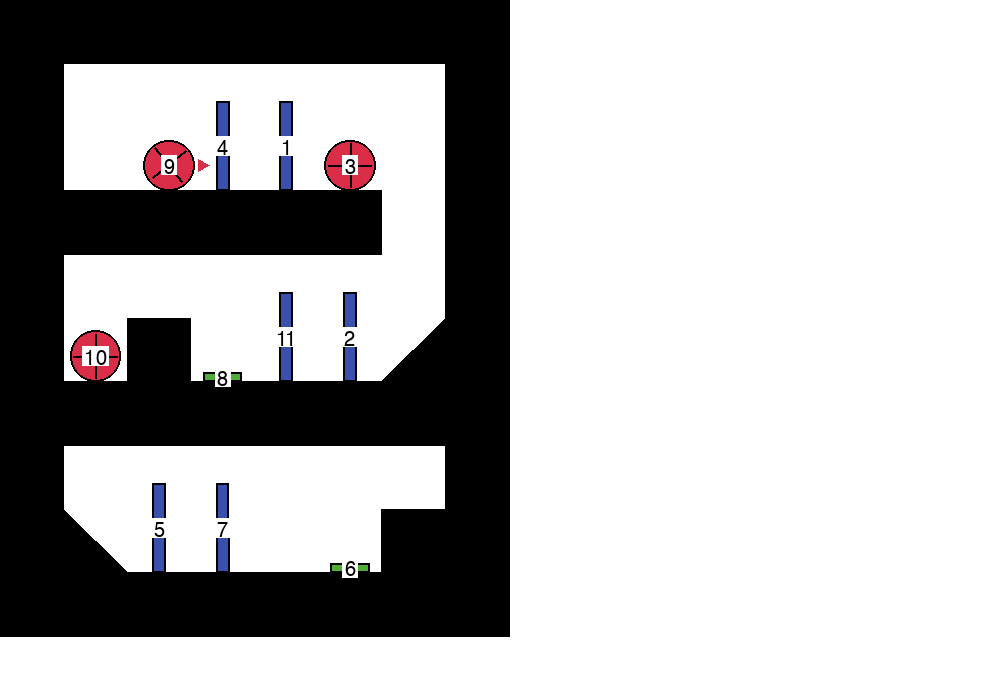}
  }\hfill
  \subfigure[\textbf{Parallel Triggers}]{%
    \includegraphics[
      width=0.32\linewidth,
      trim=0 50 480 0,
      clip
    ]{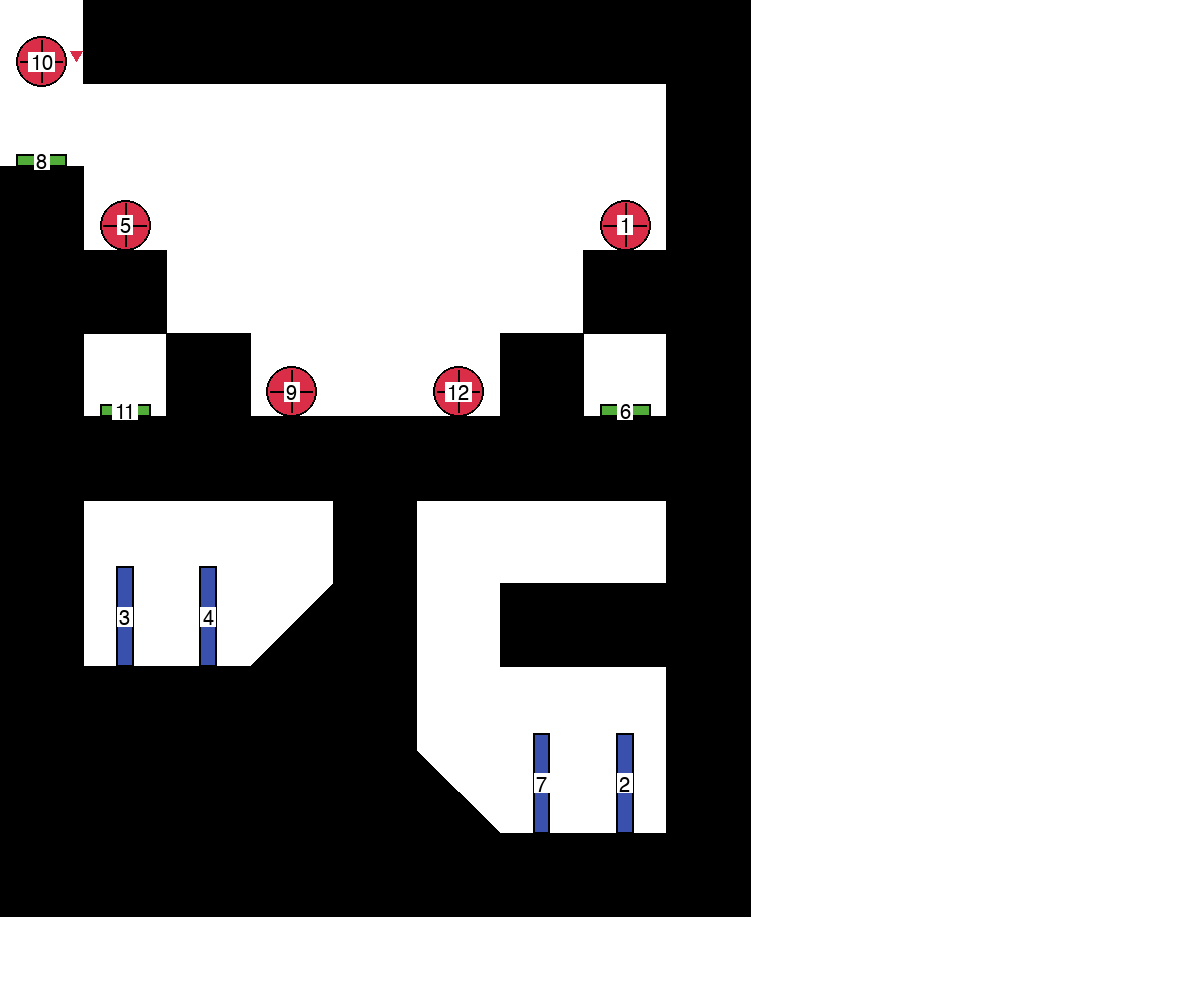}
  }

  \vspace{0.5em}

  \subfigure[\textbf{Intertwined Mechanisms}]{%
    \includegraphics[
      width=0.32\linewidth,
      trim=0 50 450 0,
      clip
    ]{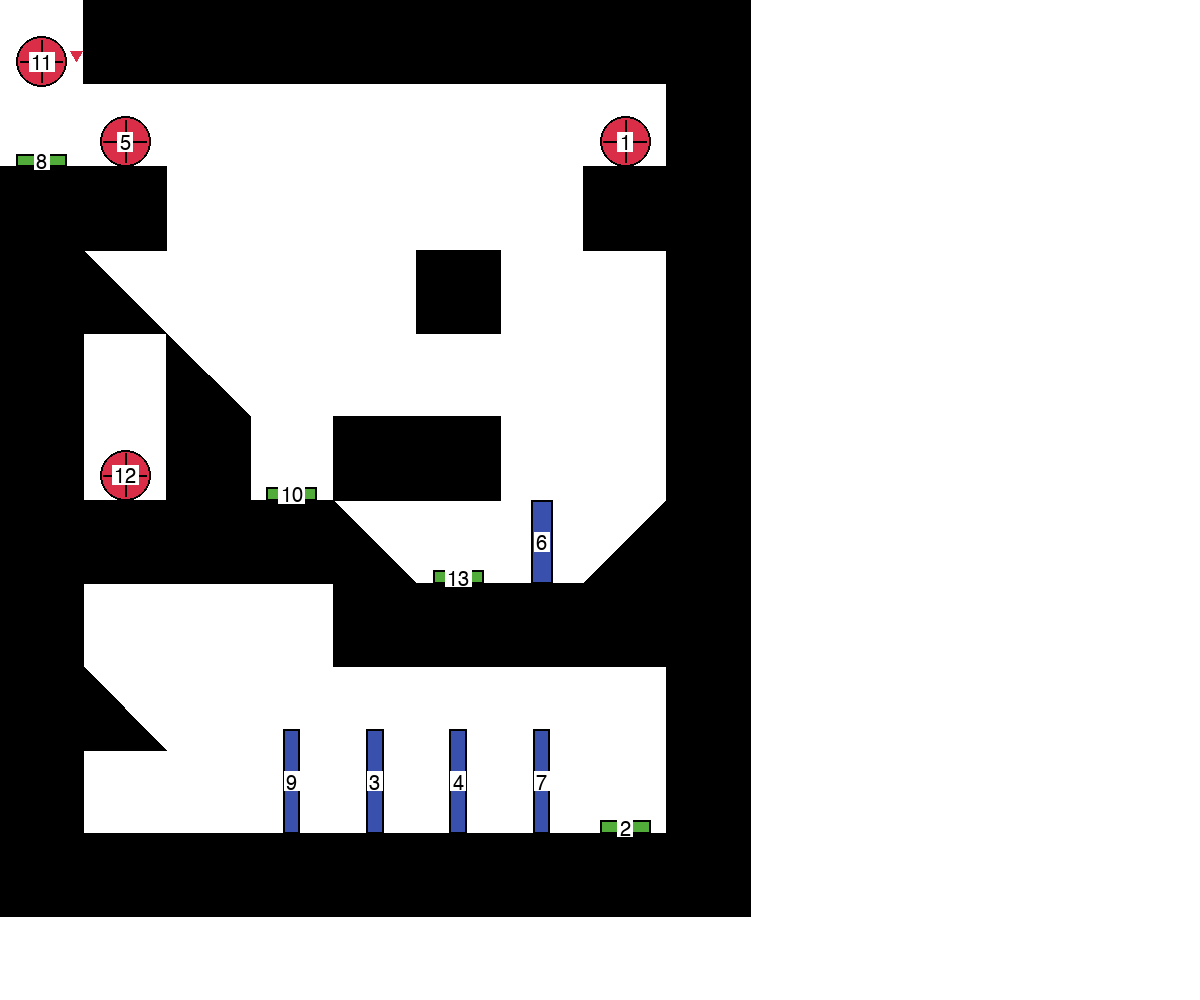}
  }\hfill
  \subfigure[\textbf{Linear Slot-Machine}]{%
    \includegraphics[
      width=0.32\linewidth,
      trim=0 60 330 0,
      clip
    ]{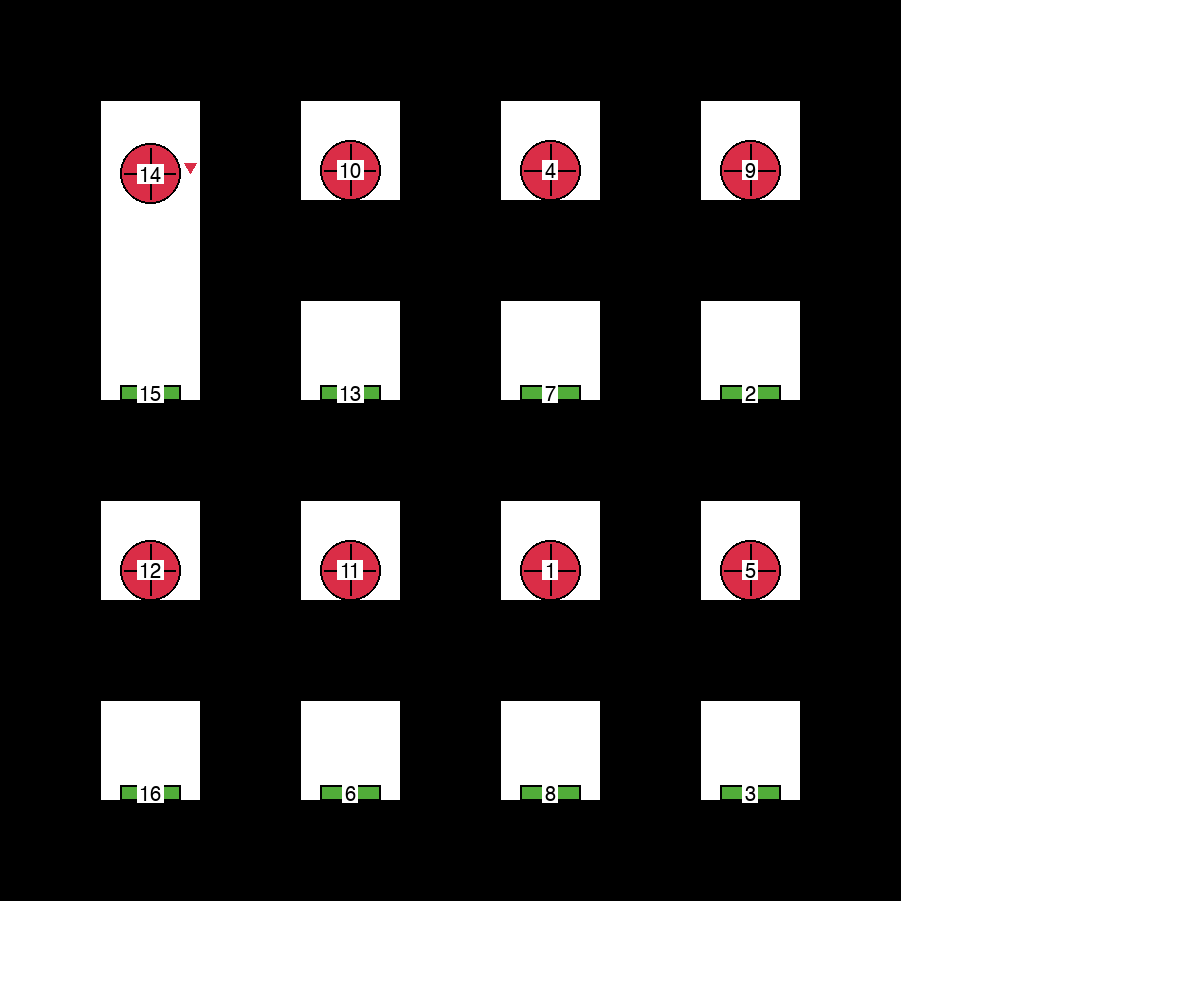}
  }\hfill
  \subfigure[\textbf{Large Slot-Machine}]{%
    \includegraphics[
      width=0.32\linewidth,
      trim=0 40 570 0,
      clip
    ]{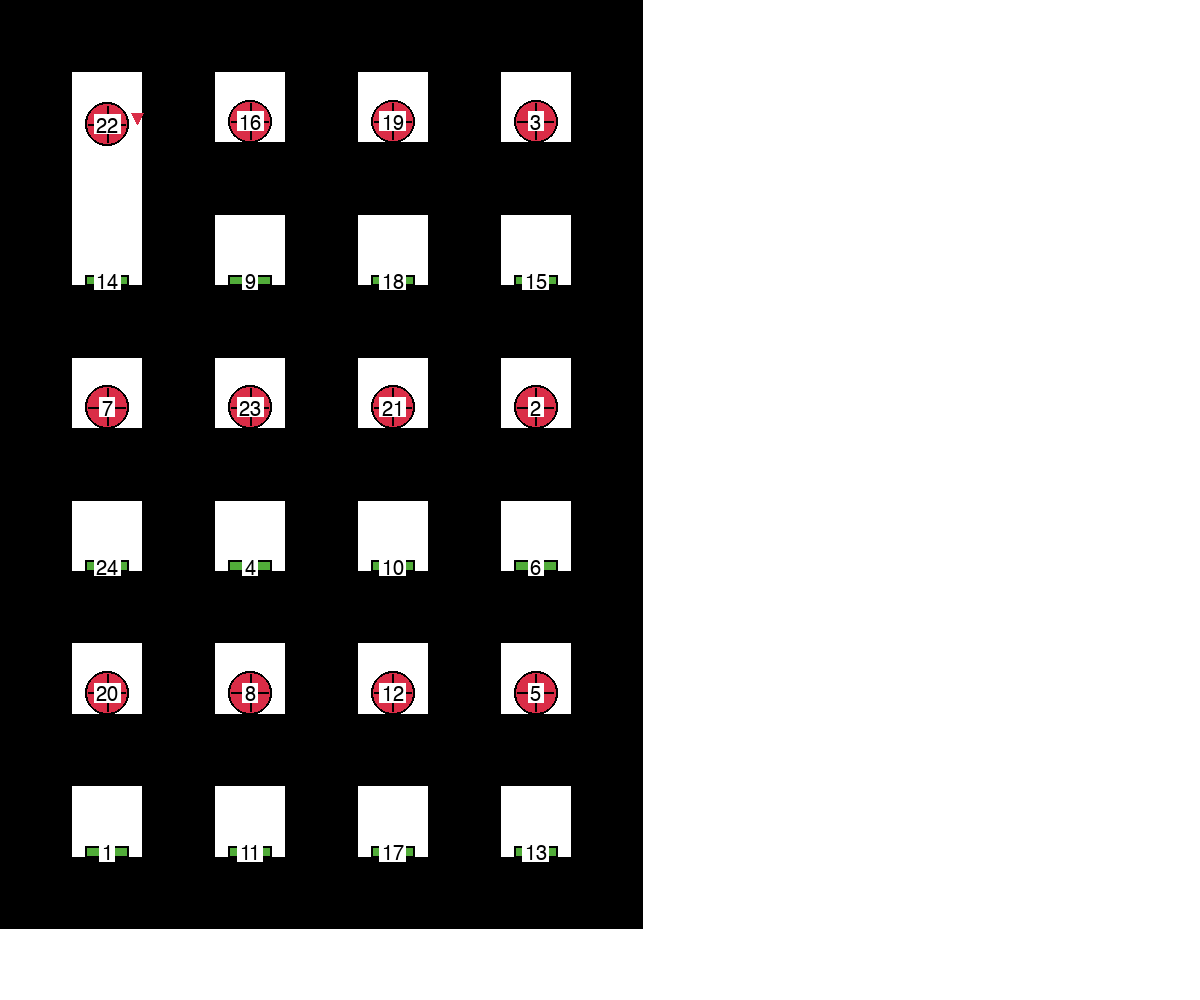}
  }

  \caption{
  Collection of environments used for evaluation.
  }
  \label{fig:envs}
\end{figure}

The six environments are:
\begin{itemize}
    \item \textbf{Minimal Chain.} A short linear cascade with four objects.
    \item \textbf{Sequential Chain.} A longer linear system with eleven objects.
    \item \textbf{Parallel Triggers.} Parallel activations cause simultaneous downstream effects.
    \item \textbf{Intertwined Mechanisms.} Concurrent interactions occur in different regions of the environment but work jointly for the whole system.
    \item \textbf{Linear Slot-Machine.} Cascades with non-contact effects via button releases.
    \item \textbf{Large Slot-Machine.} A larger Slot-machine variant with multiple concurrent causal branches.
\end{itemize}

\subsection{Baselines}

We compare against two observational heuristic baselines, each given privileged access to
perfect collision detection and exact activation times.
\emph{Collision-as-influence} adds an edge $i\!\to\!j$ whenever object $i$ collides with an inactive object $j$.
\emph{Temporal precedence} assigns edges based on activation order, attributing non-contact activations to the most recent collision event. Both baselines operate purely on observational data and are included to show the limits of even oracle-level perception in the absence of interventions. We also include results for the PC algorithm~\citep{spirtes2000causation}.

\subsection{Validating the finite sample guarantees}

Theoretical analysis predicts exponential decay of error with the number of blocking interventions per object. In the physical environments, we control execution noise indirectly via the displacement parameter $\Delta$.
However, $\Delta$ cannot be increased arbitrarily without causing objects to collide or overlap at initialization, which would invalidate the mechanism (e.g., adjacent objects are separated by $\approx0.6$ units in some environments, corresponding to a maximum displacement of $0.3$).

Figure~\ref{fig:trend1} plots skeleton SHD and recovery probability as a function of the number of interventions per object across all environments, using the largest feasible displacement for each. Since these environments are typically solved with few samples with our method, we additionally construct synthetic SCMs using the causal graphs of "Parallel Triggers" and "Large Slot-Machine" and the monotone cascade equations~\eqref{eq:monotone}. This allows us to directly control the failure probability of each variable.
Figure~\ref{fig:trend2} shows that in this controlled setting, the skeleton SHD decays exponentially with the number of interventions, and the probability of exact recovery increases exponentially (in agreement with Theorem~\ref{thm:sample}).

\begin{figure}[h]
  \centering
  \footnotesize

  \subfigure[\label{fig:envstop2}]{%
    \includegraphics[width=0.45\linewidth]{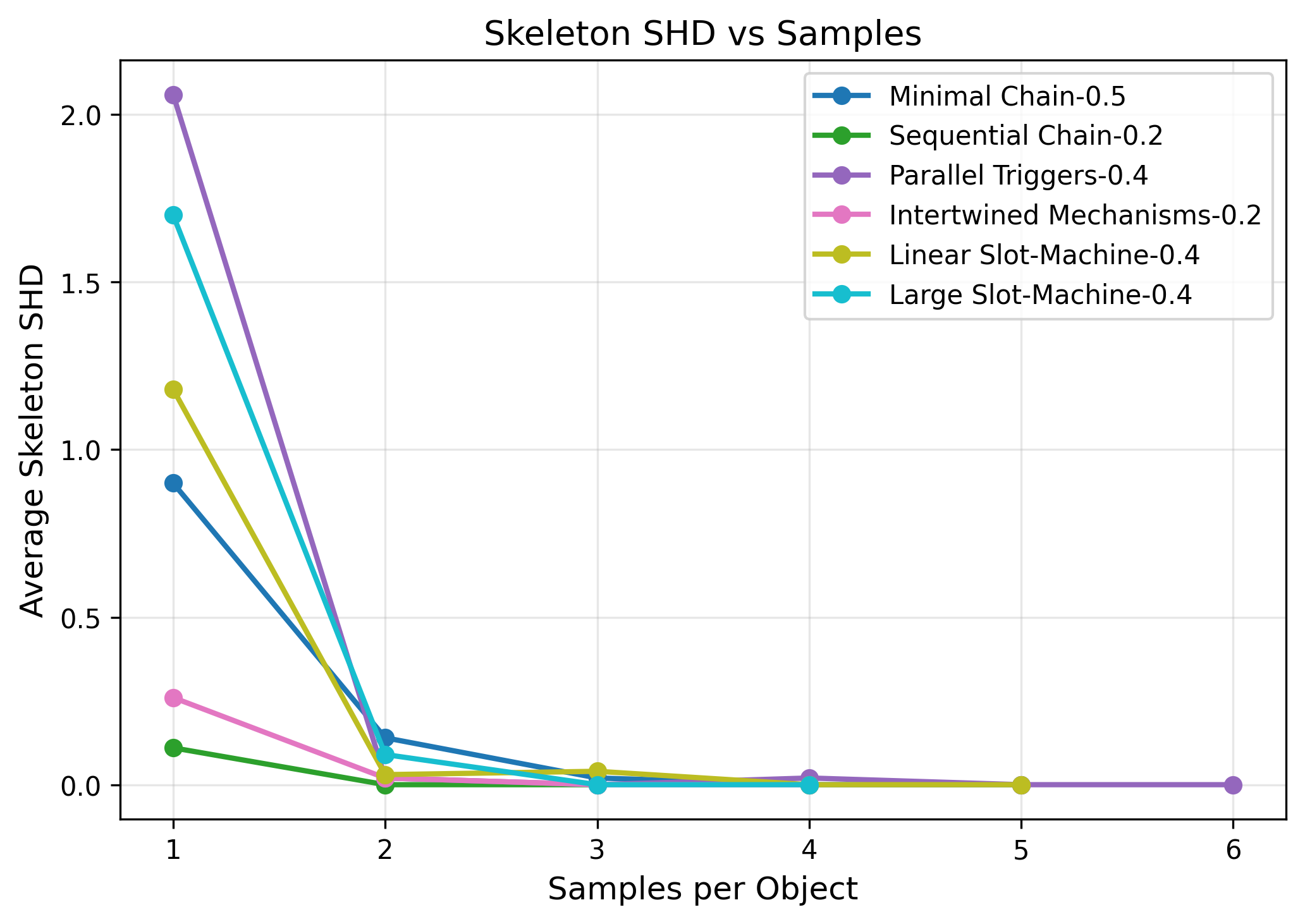}
  }\hfill%
  \subfigure[\label{fig:envsbottom2}]{%
    \includegraphics[width=0.45\linewidth]{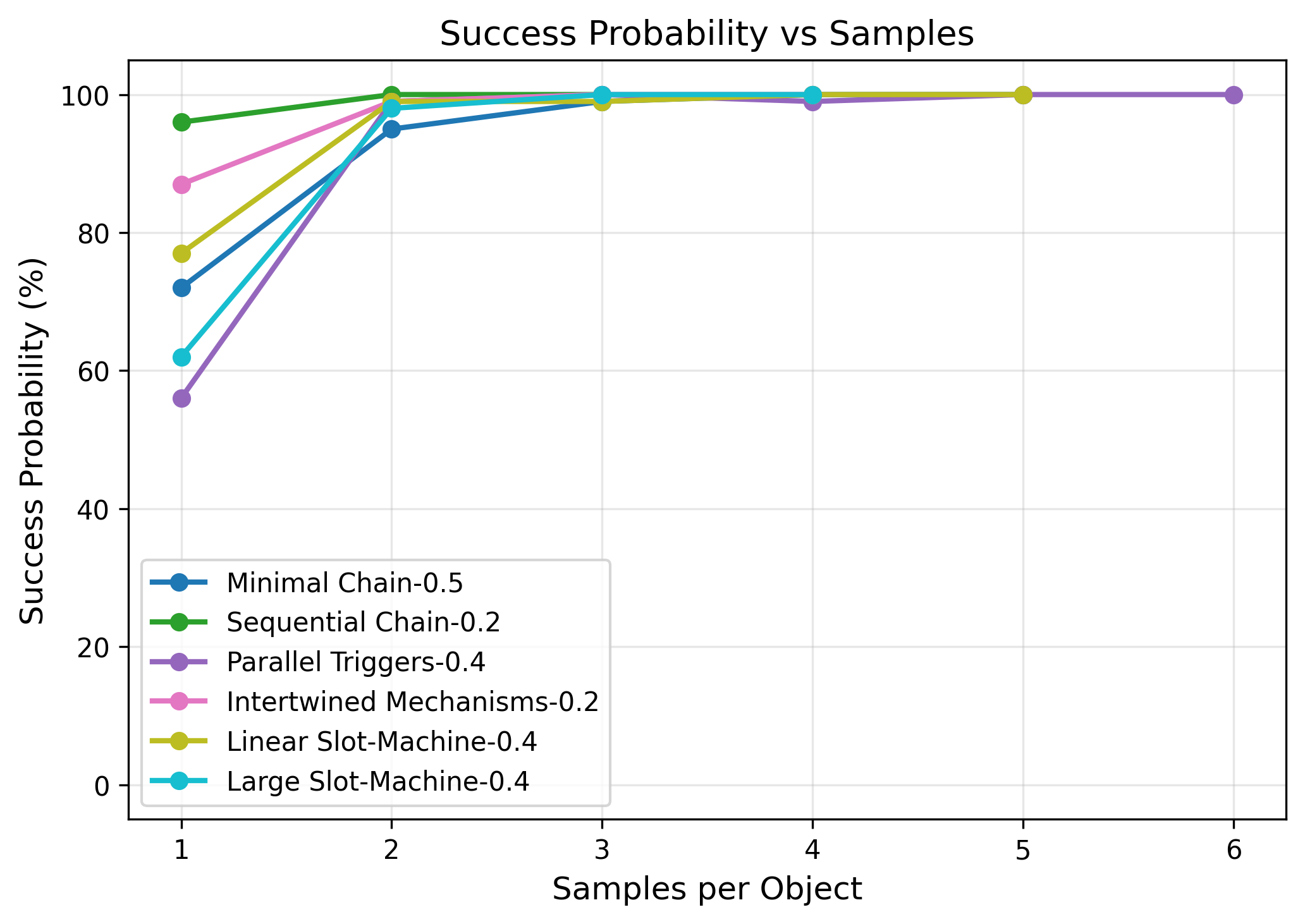}
  }

\caption{
Scaling with the number of blocking interventions per object.
(\emph{Left}) Average skeleton SHD as a function of the number of interventions.
(\emph{Right}) Probability of exact recovery.
Each curve corresponds to one environment evaluated at its largest feasible displacement $\Delta$.
We see strong sample efficiency.
}
  \label{fig:trend1}
\end{figure}


\begin{figure}[h]
  \centering
  \footnotesize

  \subfigure[\label{fig:envstop3}]{%
    \includegraphics[width=0.45\linewidth]{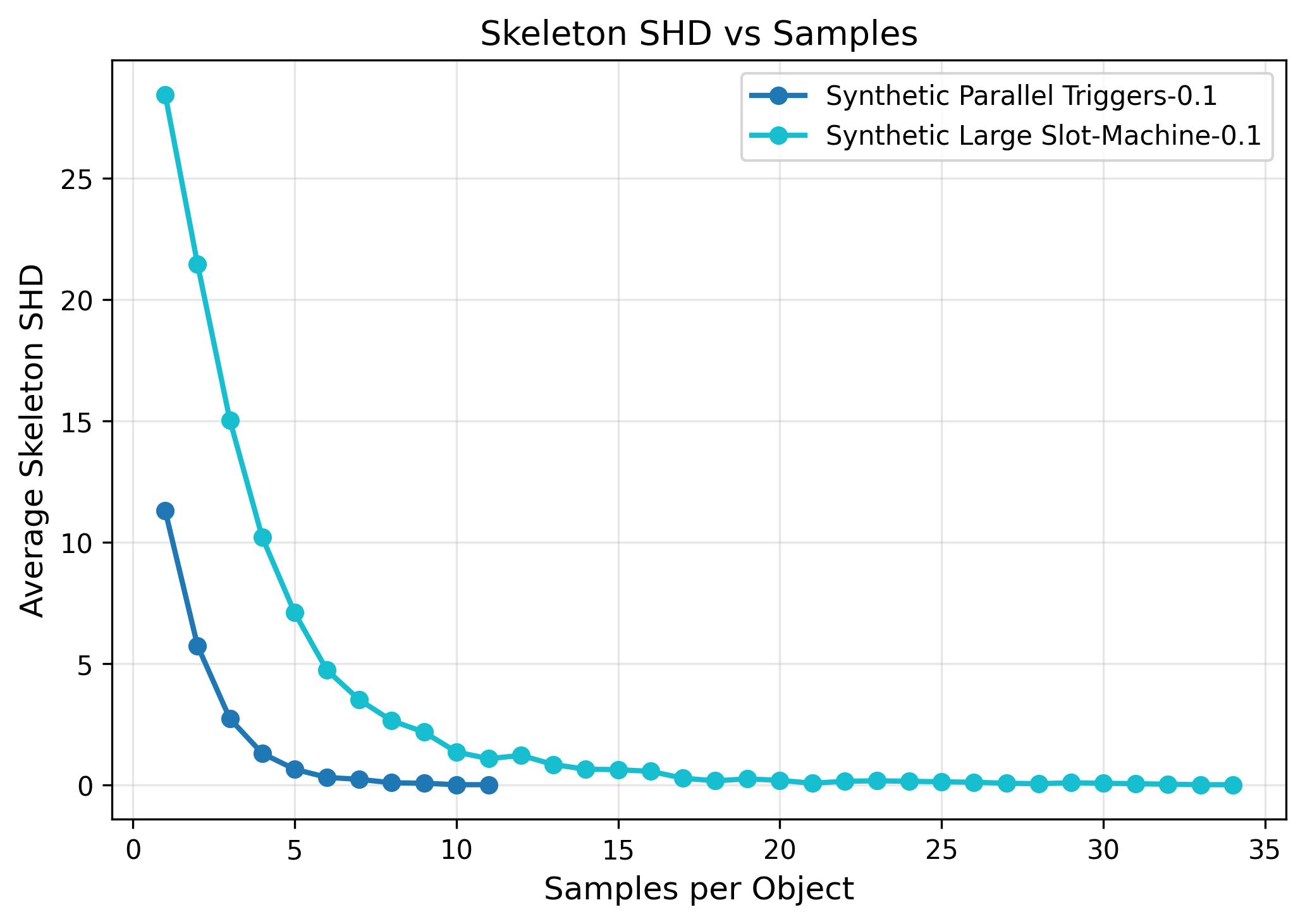}
  }\hfill%
  \subfigure[\label{fig:envsbottom3}]{%
    \includegraphics[width=0.45\linewidth]{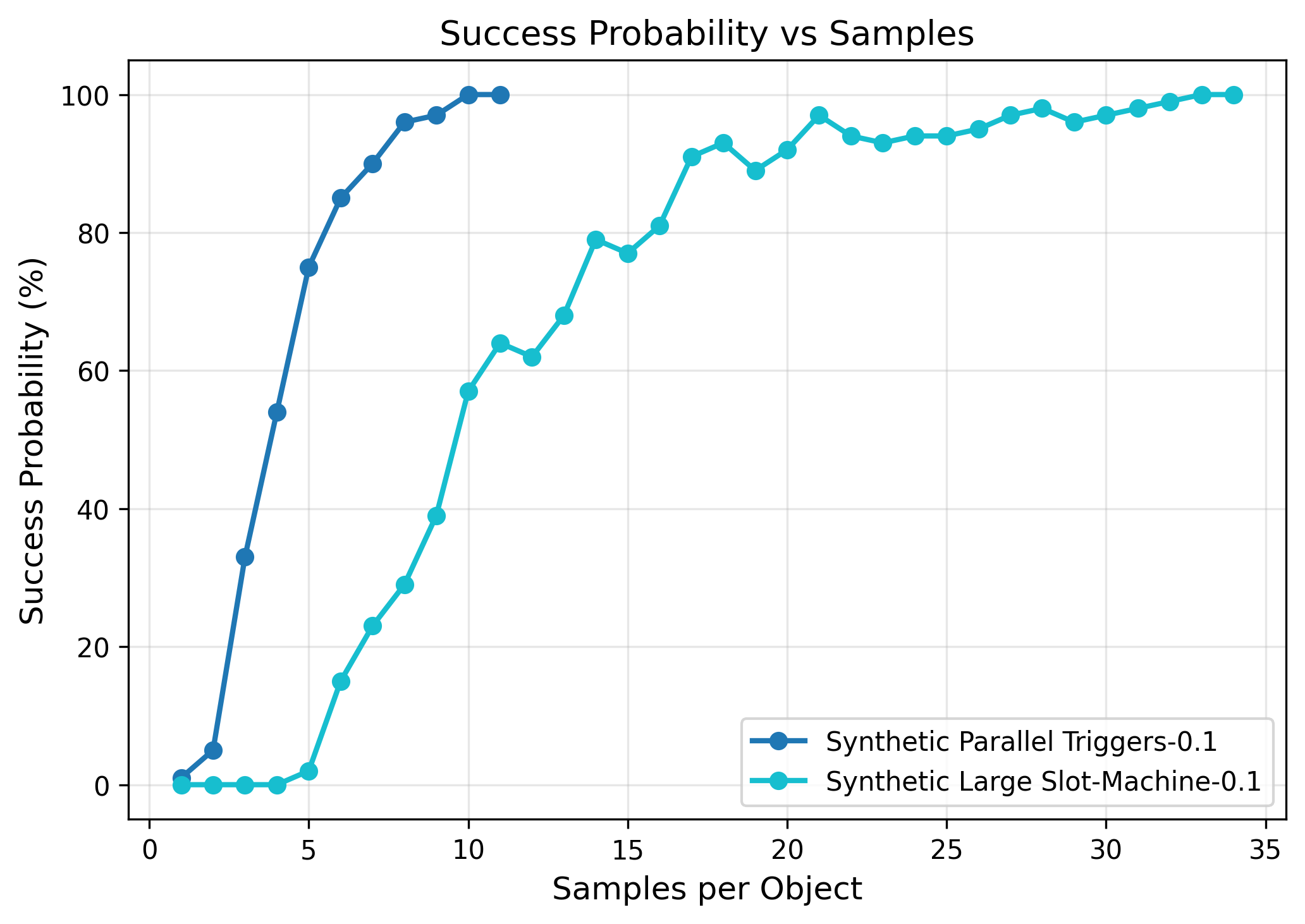}
  }

  \caption{
  Scaling with the number of blocking interventions per object on synthetic SCMs
(\emph{Left}) Average skeleton SHD.
(\emph{Right}) Probability of exact recovery.
In this controlled setting, skeleton SHD decays exponentially, and recovery probability increases exponentially with the number of interventions, in agreement with the finite-sample guarantees of Theorem~\ref{thm:sample}. 
 The number suffix denotes the Bernoulli failure parameter.}
  \label{fig:trend2}
\end{figure}

\subsection{Performance on Rube Goldberg--Inspired Environments}

We evaluate performance across displacement levels by progressively increasing the number of blocking interventions per object. For each environment and displacement $\Delta$, we identify the minimum number of interventions per object required for our method to achieve at least $95\%$ exact recovery over 100 random seeds,
where exact recovery means $\hat G = G^\star$ after transitive reduction.
We denote this quantity by $M_{\min}$. Baselines are evaluated using the same number of episodes for a fair comparison. Throughout, our method relies exclusively on interventional data.

Table~\ref{tab:summary_max_disp} summarizes results at the largest displacement $\Delta_{\max}$
considered for each environment, corresponding to the noisiest setting in which the mechanism remains meaningful.
We report $M_{\min}$ together with directed-edge F1 score and skeleton structural Hamming distance (SSHD, lower is better), both for our method and for the strongest observational baseline at $\Delta_{\max}$.

\begin{table}[t]
\centering
\small
\begin{tabular}{lccccc}
\toprule
Environment & $N$ & $\Delta_{\max}$ & $M_{\min}$ &
Our (F1 / Skel.\ SHD) & Best baseline (F1 / Skel.\ SHD) \\
\midrule
Minimal Chain          & 4  & 0.5 & 2 & 0.963 / 0.14 & 0.825 / 0.52 \\
Sequential Chain       & 11 & 0.2 & 1 & 0.995 / 0.07 & 0.714 / 2.57 \\
Parallel Triggers      & 12 & 0.4 & 2 & 0.999 / 0.02 & 0.699 / 5.55 \\
Intertwined Mechanisms & 13 & 0.2 & 2 & 0.999 / 0.02 & 0.702 / 4.95 \\
Linear Slot-Machine    & 16 & 0.4 & 2 & 0.999 / 0.03 & 0.726 / 7.00 \\
Large Slot-Machine     & 24 & 0.4 & 2 & 0.998 / 0.09 & 0.686 / 11.00 \\
\bottomrule
\end{tabular}
\caption{
\textbf{Performance at maximal displacement.}
For each environment, we report the noisiest displacement $\Delta_{\max}$ tested,
the minimum number of blocking interventions per object $M_{\min}$ for $\ge95\%$ exact recovery, and performance at that setting.
Our method consistently achieves near-perfect recovery using only 1--2 interventions per object, while observational heuristics have large structural errors.
Full sweeps across displacements and additional metrics are in Appendix~\ref{app:results}.
}
\label{tab:summary_max_disp}
\end{table}

\paragraph{Key takeaways.}
Two consistent patterns emerge. First, recovery is highly sample-efficient: across all environments, $M_{\min}\in\{1,2\}$ interventions per object suffice for reliable exact recovery, even under substantial execution noise.
Second, the gap to observational heuristics is qualitative rather than incremental: collision- and time-based methods exhibit large skeleton errors at $\Delta_{\max}$, reflecting fundamental causal ambiguity in the presence of parallel activations, delayed effects, and non-contact interactions.
These results support the central claim of the paper: when the causal abstraction aligns with mechanistic responsibility and interventions respect the cascade semantics, causal discovery can become both simple and reliable.

\section{Discussion and Limitations}


\paragraph{Observational identifiability under stochastic failures.}
In our monotone cascade model, purely observational data can asymptotically emulate blocking interventions when every component fails with positive probability, since conditioning on $X_i=0$ can reveal the same downstream suppression pattern as blocking $i$. 
But if some parts of the mechanism are deterministic, those informative "failure" events may never occur, so the causal structure can remain ambiguous even with infinite observational data (e.g., a ``Large Slot-Machine'' where $\Delta$ is small and balls always press the buttons).

\paragraph{Structural assumptions.}
Our method assumes that the underlying causal graph is a directed tree, so that each object has at most one direct trigger. This reflects the functional design of many chain-reaction systems, but excludes settings in which an object requires multiple parents to be activated (e.g., two objects must act together to activate a third).
Extensions to such systems require interventions on sets of objects, leading to a combinatorial increase in experimental complexity. Understanding potential trade-offs between feasible intervention design and causal expressivity is an important direction for future work.

\paragraph{Robustness to measurement noise.}
Our theoretical analysis assumes activation variables are observed without error.
In practice, activation must be inferred from noisy measurements,
and labels may be corrupted (e.g., objects mistakenly marked as active or inactive).
Our method extends to this setting by replacing the strict criterion
$\hat p_{ij} = 0$ with a threshold that accounts for finite samples and label noise. Robustness to measurement error is an important extension for real-world applications.

\paragraph{Active discovery.}

In our experiments, interventions are selected uniformly at random, and the outcomes of these interventions are collected into an offline dataset from which the causal graph is reconstructed. A natural next step is to pose causal discovery as a sequential decision-making problem, with the goal of minimizing the number of interventions needed to recover the graph with high confidence. This would require maintaining uncertainty over the current causal structure and prioritizing interventions that are maximally informative. This direction is particularly relevant in domains where interventions are costly, risky, or otherwise constrained.

\section*{Acknowledgements}

This research was supported by the UKRI Centre for Doctoral Training in Accountable, Responsible and Transparent AI (ART-AI) [EP/S023437/1].


\bibliography{clear}

\newpage
\appendix

\section{Worked-Out Execution Rollouts}
\label{app:envs}

This appendix provides qualitative visualizations of the chain-reaction environments used throughout the paper. For each environment, we include frame-by-frame execution rollouts illustrating how activations propagate through the system.

\paragraph{How to read the figures.}
Each figure is organized as a sequence of frames. Frames should be read from \emph{left to right} and then \emph{top to bottom}, similar to a comic strip.
Unless there is an intervention mentioned, it is an observational rollout.

\begin{figure}[H]
    \centering
    \includegraphics[trim=0 0 20 0,clip,width=\linewidth]{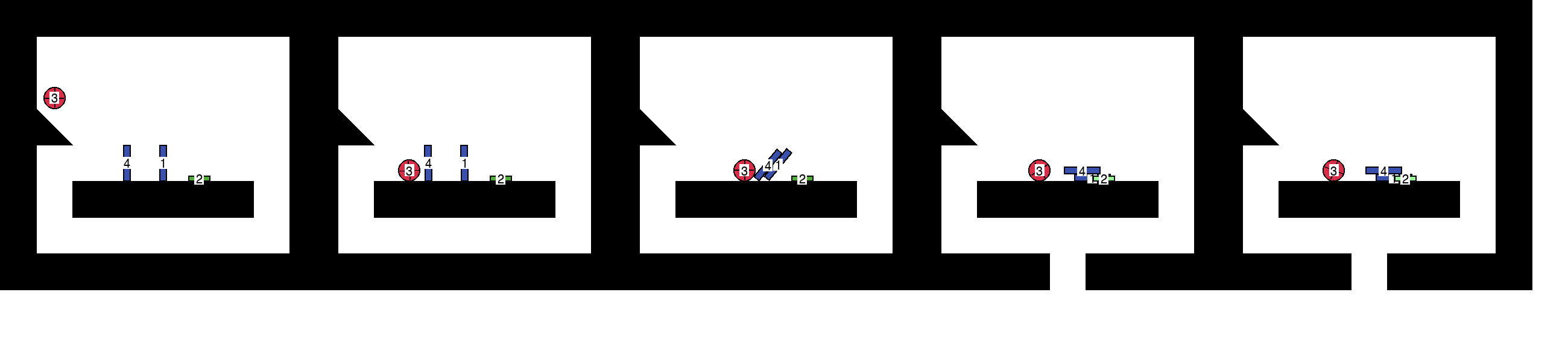}
    \caption{
    \textbf{Observational rollout of Minimal Chain}
    }\label{fig:env:t1}
\end{figure}

\begin{figure}[H]
    \centering
    \includegraphics[trim=0 0 20 0,clip,width=\linewidth]{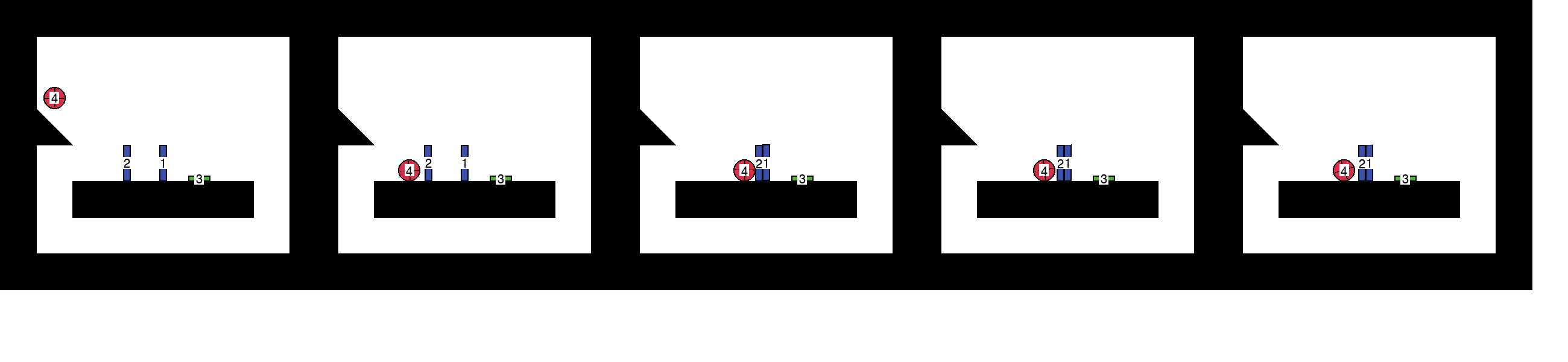}
    \caption{
    \textbf{Interventional rollout of Minimal Chain (intervention on object 1)}
    }
\end{figure}
\begin{figure}[H]
    \centering
    \includegraphics[trim=0 0 20 0,clip,width=\linewidth]{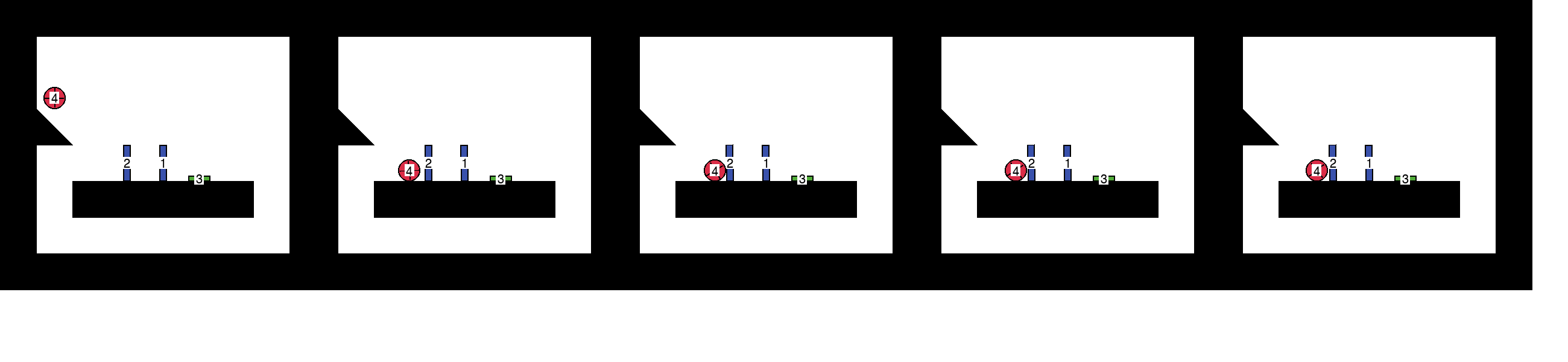}
    \caption{
    \textbf{Interventional rollout of Minimal Chain (intervention on object 2)}
    }
\end{figure}
\begin{figure}[H]
    \centering
    \includegraphics[trim=0 0 20 0,clip,width=\linewidth]{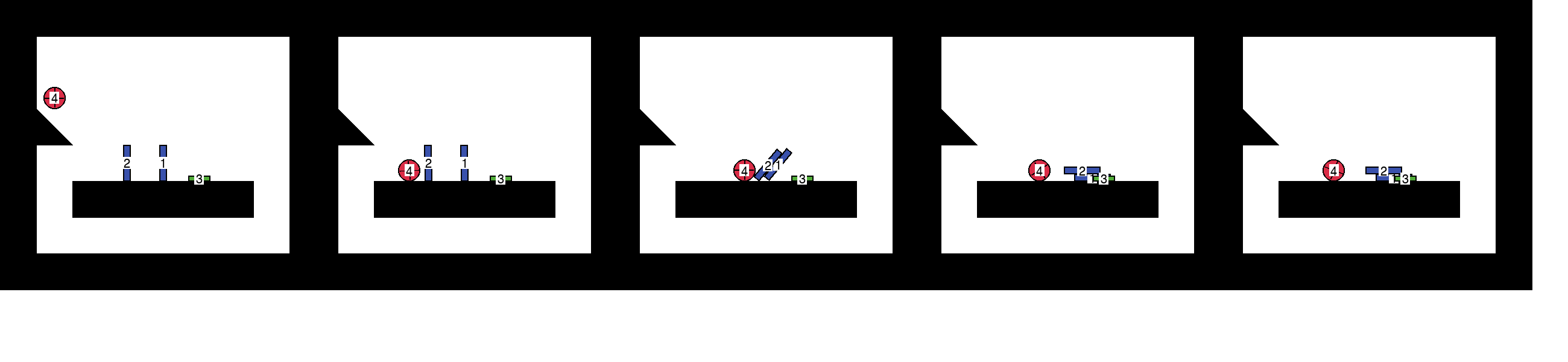}
    \caption{
    \textbf{Interventional rollout of Minimal Chain (intervention on object 3)}
    }
\end{figure}
\begin{figure}[H]
    \centering
    \includegraphics[trim=0 0 20 0,clip,width=\linewidth]{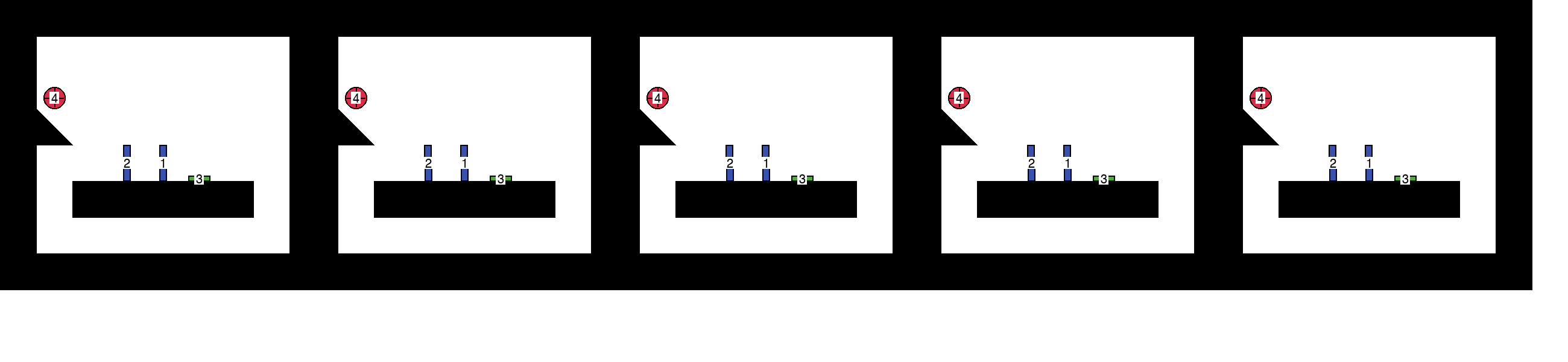}
    \caption{
    \textbf{Interventional rollout of Minimal Chain (intervention on object 4)}
    }
\end{figure}

\begin{figure}[H]
    \centering
    \includegraphics[trim=0 0 100 0,clip,width=\linewidth]{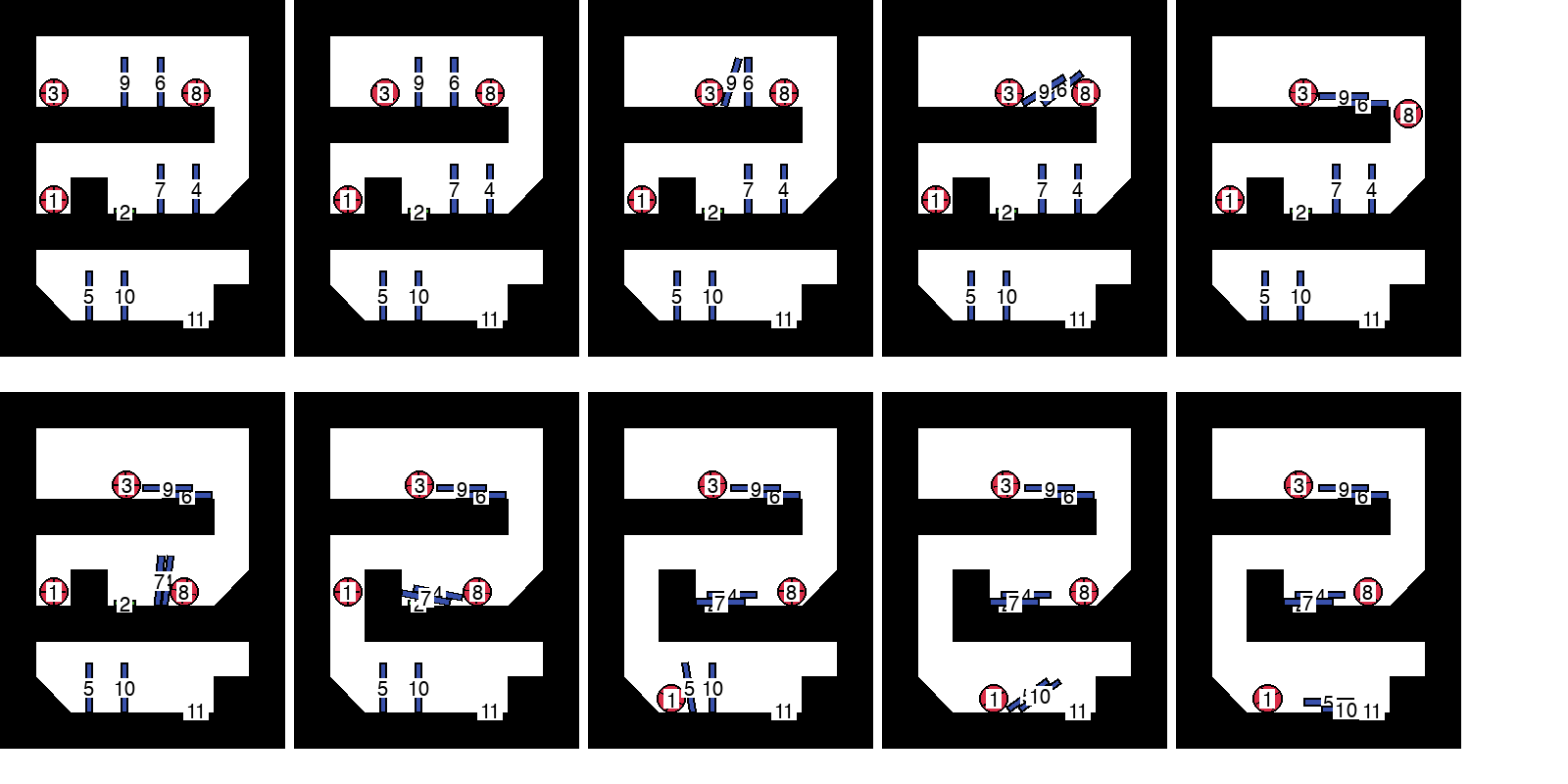}
    \caption{
    \textbf{Observational rollout of Sequential Chain}
    }\label{fig:env:t0}
\end{figure}

\begin{figure}[H]
    \centering
    \includegraphics[trim=0 0 100 0,clip,width=\linewidth]{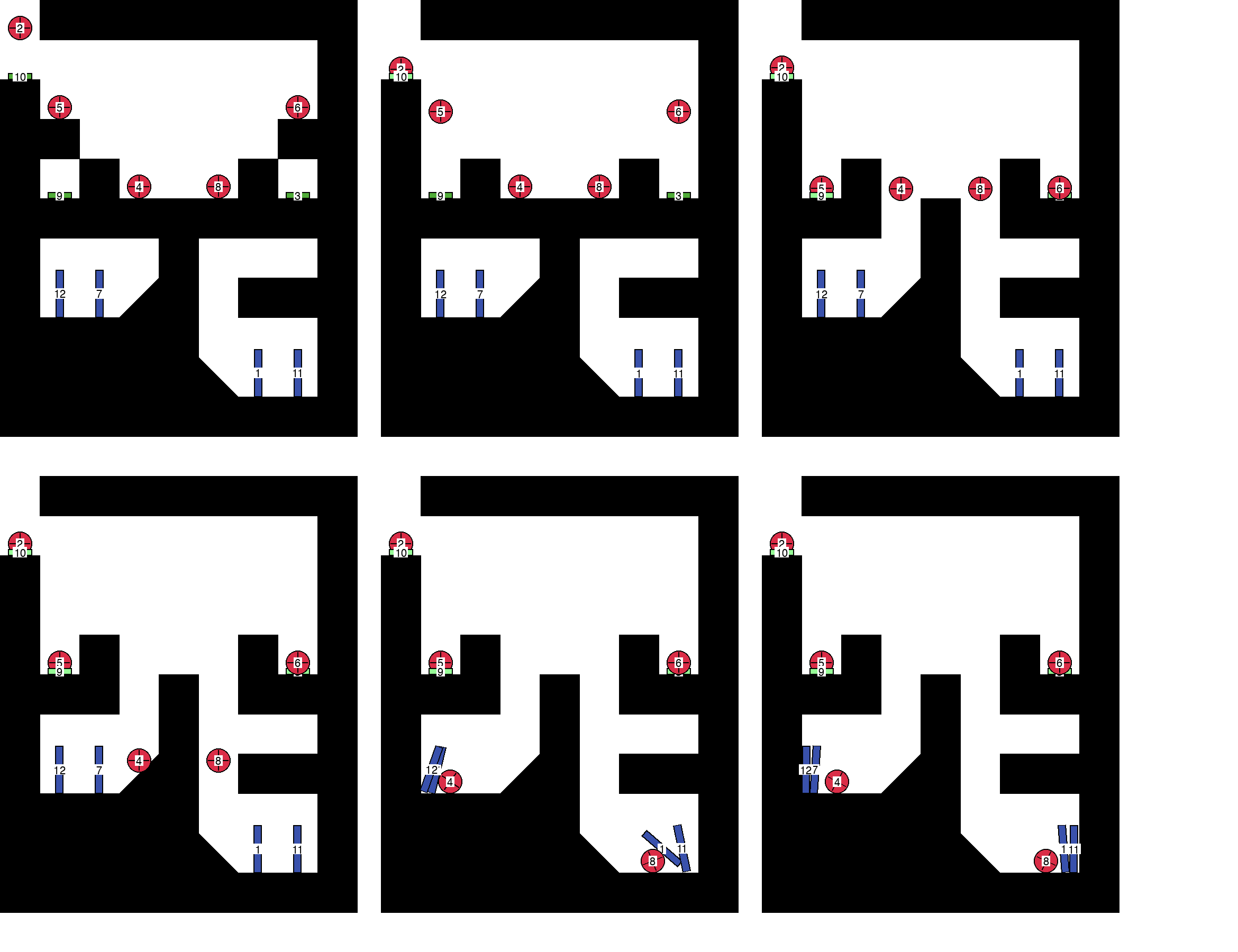}
    \caption{
    \textbf{Observational rollout of Parallel Triggers}
    }\label{fig:env:t2}
\end{figure}

\begin{figure}[H]
    \centering
    \includegraphics[trim=0 0 100 0,clip,width=\linewidth]{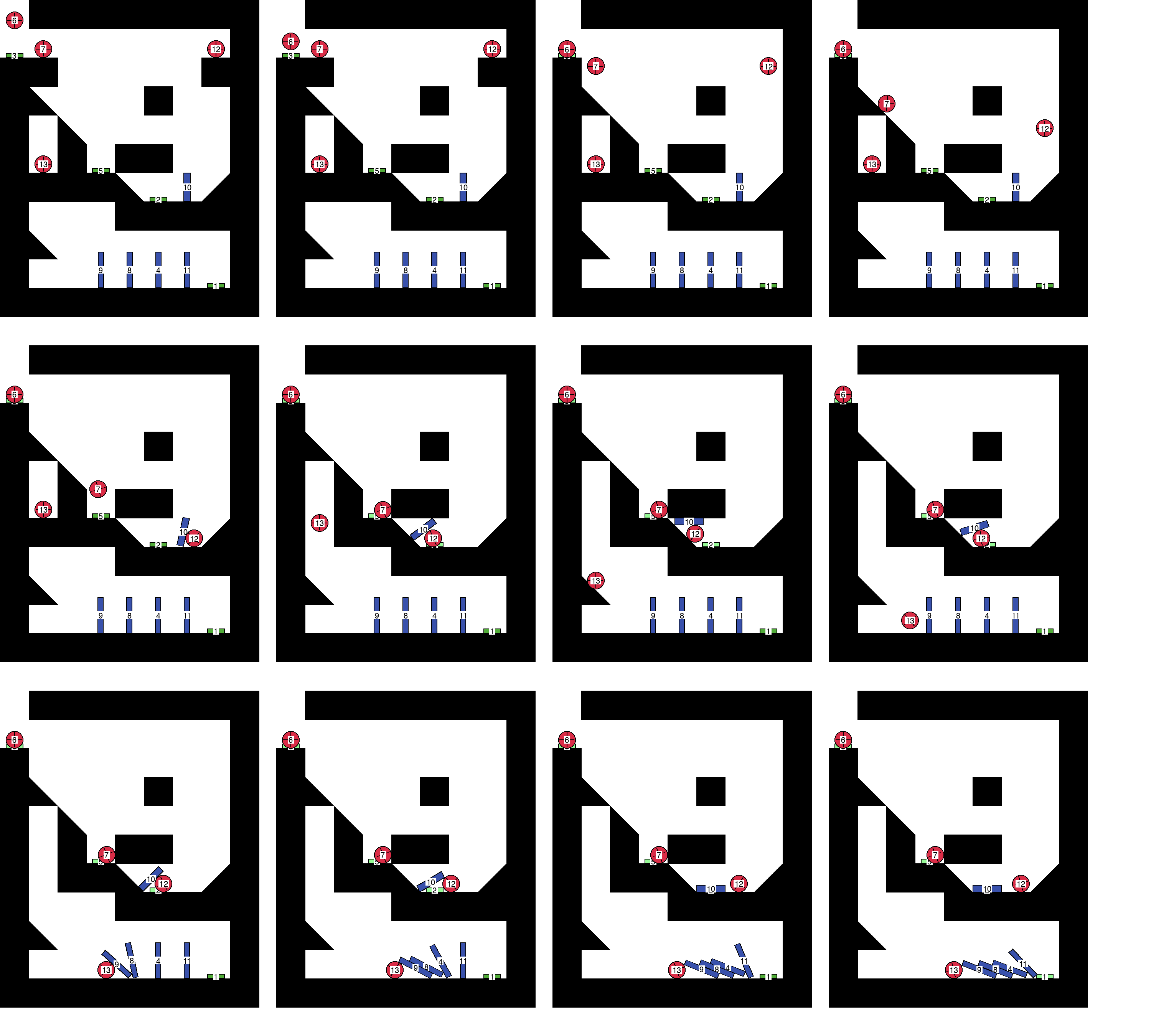}
    \caption{
    \textbf{Observational rollout of Intertwined Mechanisms}
    }\label{fig:env:t3}
\end{figure}

\begin{figure}[H]
    \centering
    \includegraphics[trim=0 0 60 0,clip,width=\linewidth]{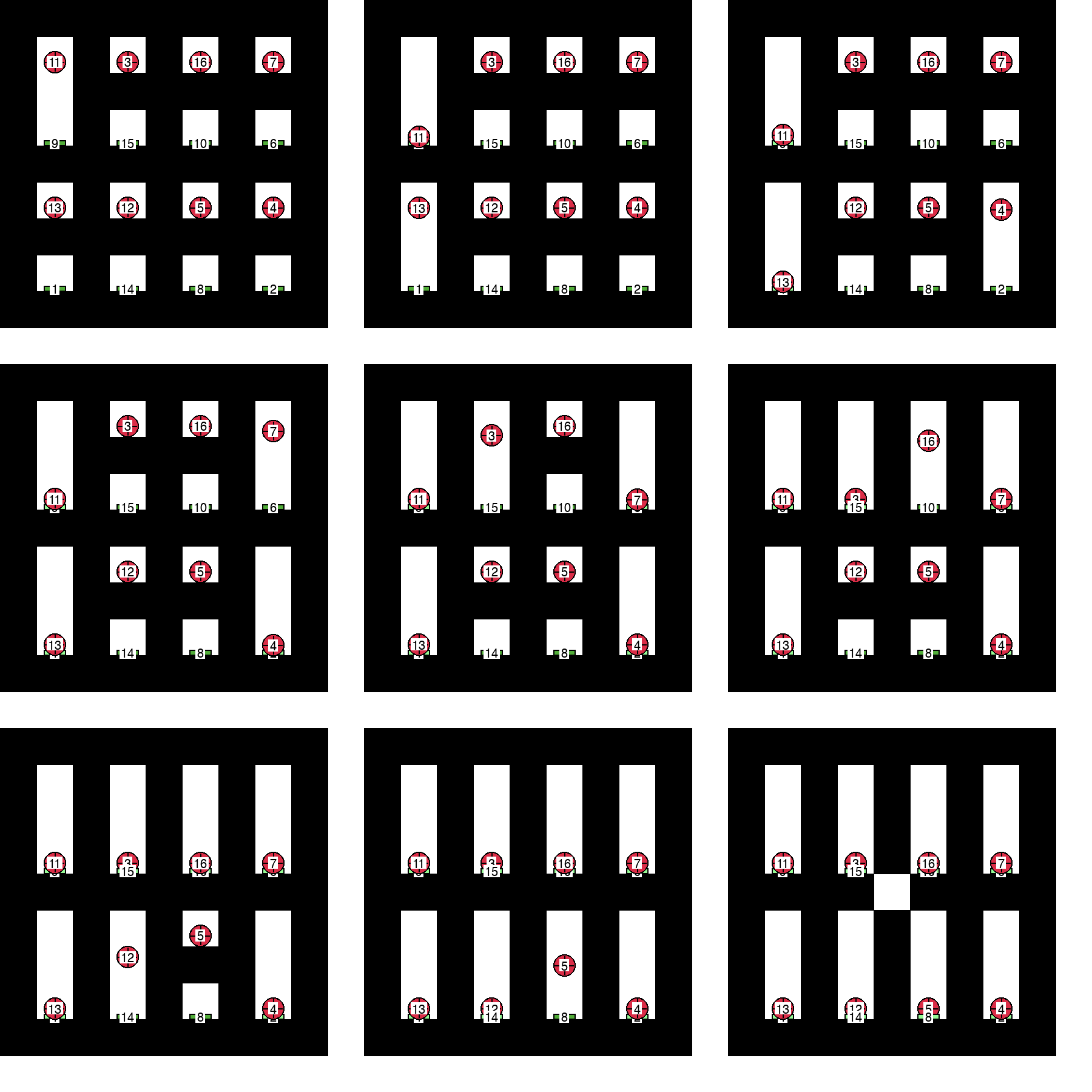}
    \caption{
    \textbf{Observational rollout of Linear Slot-Machine}
    }\label{fig:env:t4}
\end{figure}

\begin{figure}[H]
    \centering
    \includegraphics[trim=0 0 100 0,clip,width=\linewidth]{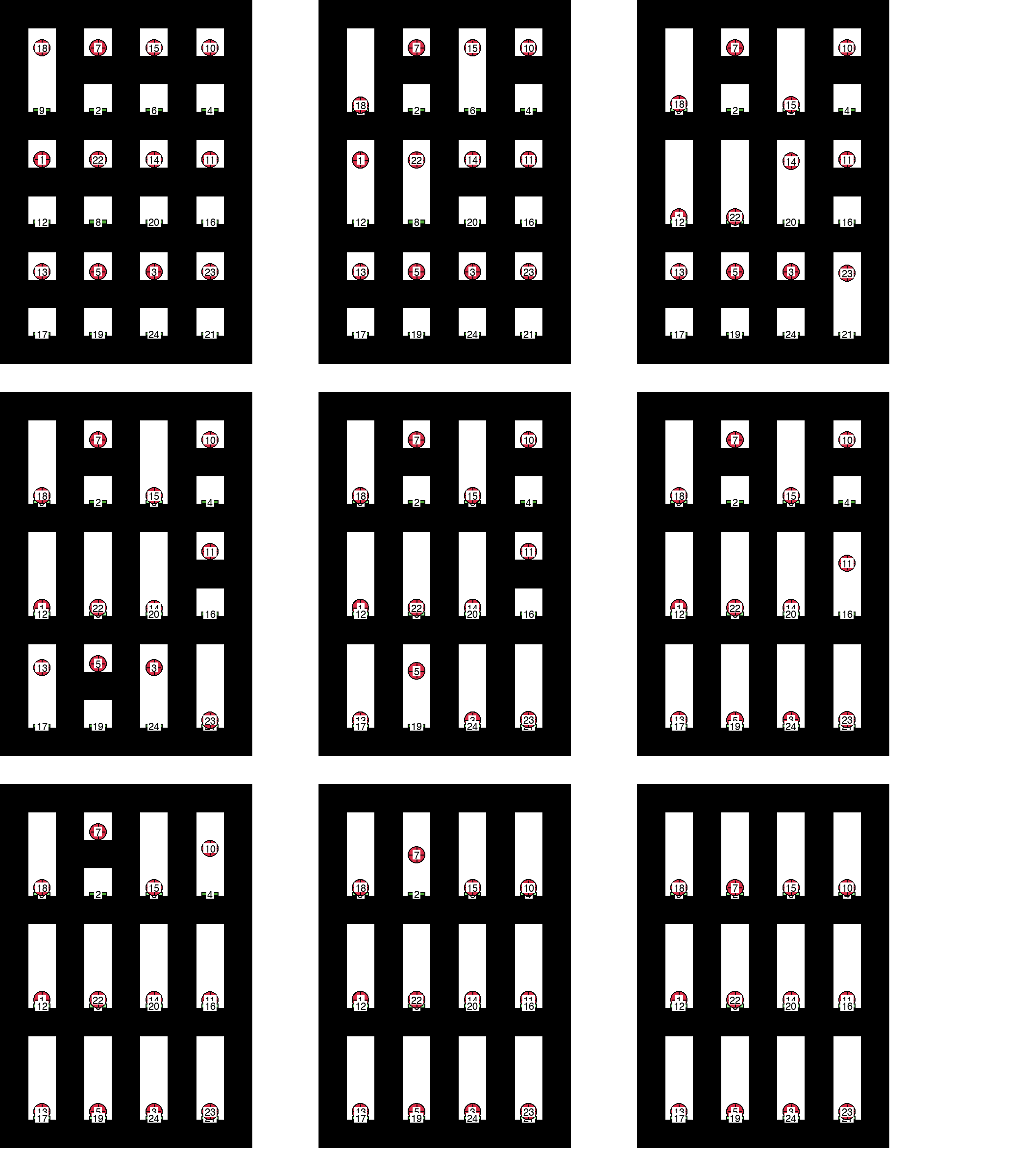}
    \caption{
    \textbf{Observational rollout of Large Slot-Machine}
    }\label{fig:env:t4_big}
\end{figure}

The solutions for the six environments are:
\begin{itemize}
    \item \textbf{Minimal chain (Figure~\ref{fig:env:t1}):}
    $3\rightarrow4\rightarrow1\rightarrow2$.
    
    \item \textbf{Sequential chain (Figure~\ref{fig:env:t0}):} $3\rightarrow9\rightarrow6\rightarrow8\rightarrow4\rightarrow7\rightarrow2\rightarrow1\rightarrow5\rightarrow10\rightarrow11$ (The button $2$ releases ball $1$ by removing the wall underneath it).

    \item \textbf{Parallel triggers (Figure~\ref{fig:env:t2}):} $2\rightarrow10,\;\;10\rightarrow5\rightarrow9\rightarrow8\rightarrow1\rightarrow11,\;\;10\rightarrow6\rightarrow3\rightarrow4\rightarrow7\rightarrow12$

    \item \textbf{Intertwined mechanisms (Figure~\ref{fig:env:t3}):} $6\rightarrow3,\;\;3\rightarrow7\rightarrow5,\;\;3\rightarrow12\rightarrow10\rightarrow2\rightarrow13\rightarrow9\rightarrow8\rightarrow4\rightarrow11\rightarrow1$.

    \item \textbf{Linear Slot-machine (Figure~\ref{fig:env:t4}):} $11\rightarrow9\rightarrow13\rightarrow1\rightarrow4\rightarrow2\rightarrow7\rightarrow6\rightarrow3\rightarrow15\rightarrow16\rightarrow10\rightarrow12\rightarrow14\rightarrow5\rightarrow8$.

    \item \textbf{Large slot-machine (Figure~\ref{fig:env:t4_big}):} $18\rightarrow9,\;\;9\rightarrow1\rightarrow12\rightarrow23\rightarrow21\rightarrow13\rightarrow17,\;\;9\rightarrow22\rightarrow8,\;\;9\rightarrow15\rightarrow6\rightarrow14\rightarrow20\rightarrow3\rightarrow24\rightarrow5\rightarrow19\rightarrow11\rightarrow16\rightarrow10\rightarrow4\rightarrow7\rightarrow2$.
    
\end{itemize}


\section{Proof of Theorem~\ref{thm:sample} (Sample complexity)}\label{app:proofs}

\begin{proof}
    Fix $i\neq j$.
If $j\notin\Desc(i)$, then under the intervention $\doop{X_i=0}$ the activation of $X_j$ is possible with probability
$p_{ij}=\Pr(X_j=1\mid \doop{X_i=0})\ge q_{\min}$.
Across the $n_i$ independent executions under $\doop{X_i=0}$, the outcomes $X_j^{(e)}$ are independent Bernoulli trials with success probability $p_{ij}$.
Therefore,
\[
\Pr(\hat A(i,j)=1)=\Pr(\hat p_{ij}=0)
=\Pr\!\Big(\sum_{e=1}^{n_i} X_j^{(e)}=0\Big)
=(1-p_{ij})^{n_i}\le e^{-p_{ij}n_i}\le e^{-q_{\min}n_i},
\]
which proves~\eqref{eq:expbound}. Now, define the failure event
\[
E \;\coloneqq\; \{\hat A \neq A\}
\;=\;
\bigl\{\exists\, i\neq j : \hat A(i,j) \neq A(i,j)\bigr\}.
\]


If $j \in \Desc(i)$, then whenever the intervention $\doop{X_i=0}$ is performed, then $X_j=0$ (Lemma~\ref{lem:cascade}). Since descendant pairs cannot produce errors, the failure event can only arise from non-descendant pairs. Therefore,

\[
E \;\subseteq\;
\bigcup_{i\neq j:\, j \notin \Desc(i)}
\{\hat A(i,j)=1\}.
\]
Applying the union bound gives
\[
\Pr(\hat A \neq A)
\;\le\;
\sum_{i\neq j:\, j \notin \Desc(i)}
\Pr\bigl(\hat A(i,j)=1\bigr).
\]

For any non-descendant pair $(i,j)$, \eqref{eq:expbound} says that
\(
\Pr(\hat A(i,j)=1) \le \exp(-q_{\min} n_i).
\)
Using $n_i \ge n_{\min}$ for all $i$ and noting that there are at most
$N(N-1)$ ordered pairs with $i\neq j$, we obtain
\[
\Pr(\hat A \neq A)
\;\le\;
N(N-1)\, \exp(-q_{\min} n_{\min}),
\]
which yields~\eqref{eq:probcorrect}.
To ensure $\Pr(\hat A \neq A)\le \delta$, it suffices that
\[
N(N-1)\, \exp(-q_{\min} n_{\min}) \le \delta.
\]
Solving for $n_{\min}$ gives
\[
n_{\min}
\;\ge\;
\frac{1}{q_{\min}}
\Bigl(\log\!\bigl(N(N-1)\bigr) + \log\tfrac{1}{\delta}\Bigr).
\]

\end{proof}

\section{Additional experimental results}\label{app:results}

Section~\ref{sec:experiments} describes the experimental setup and evaluation protocol. In this appendix, we report the complete set of results across all environments and displacement settings we considered in our experiments. PC is omitted from some tables because it failed with a math domain error. SSHD denotes skeleton SHD.

We estimate $\hat q_{\min}$ for each environment and displacement by running 1000 blocking interventions per object, computing $\hat p_{ij}$ for all non-descendant pairs $(i, j)$ using the ground-truth graph, and taking the smallest $\hat p_{ij}$. Note that $\hat q_{\min}$ is estimated from a separate interventional dataset and is reported only to characterize the sample-complexity regime induced by displacement $\Delta$, not as an input to the learning algorithm.

For the synthetic environments considered, $q_{\min}$ can instead be computed exactly from the known causal graph and the uniform node failure probability. Because these graphs are branched, the minimum non-descendant activation probability is achieved by the deepest leaf node, yielding $q_{\min} = (1-p)^{L}$, where $L$ is the length of the longest directed path. For Synthetic Parallel Triggers-0.1 this gives $q_{\min}=0.9^{7}\approx0.478$, while for Synthetic Large Slot-Machine-0.1 it is $q_{\min}=0.9^{16}\approx0.185$.


\begin{table}[h]
\centering
\begin{tabular}{lcccccccc}
\toprule
Method & $\Delta$ & $\hat q_{\min}$ & Precision & Recall & F1 & SHD & SSHD & Time (s) \\
\midrule
\textbf{Our Method} (M=4) & 0.1 & 0.395 & 0.980 & 0.980 & \textbf{0.980} & \textbf{0.09} & \textbf{0.06} & 0.312 \\
Collision-as-influence & 0.1 & 0.395 & 0.970 & 0.647 & 0.776 & 1.06 & 1.00 & 0.000 \\
Temporal-precedence & 0.1 & 0.395 & 1.000 & 0.667 & 0.800 & 1.00 & 1.00 & 0.000 \\
PC & 0.1 & 0.395 & 0.476 & 0.919 & 0.626 & 3.00 & 3.24 & 0.006 \\
\midrule
\textbf{Our Method} (M=3) & 0.2 & 0.396 & 0.973 & 0.973 & \textbf{0.973} & \textbf{0.13} & \textbf{0.10} & 0.324 \\
Collision-as-influence & 0.2 & 0.396 & 0.945 & 0.630 & 0.756 & 1.11 & 1.00 & 0.000 \\
Temporal-precedence & 0.2 & 0.396 & 0.985 & 0.657 & 0.788 & 1.03 & 1.00 & 0.000 \\
PC & 0.2 & 0.396 & 0.459 & 0.863 & 0.598 & 3.00 & 3.41 & 0.005 \\
\midrule
\textbf{Our Method} (M=3) & 0.3 & 0.408 & 0.977 & 0.977 & \textbf{0.977} & \textbf{0.12} & \textbf{0.10} & 0.326 \\
Collision-as-influence & 0.3 & 0.408 & 0.920 & 0.670 & 0.770 & 0.99 & 0.80 & 0.000 \\
Temporal-precedence & 0.3 & 0.408 & 0.963 & 0.693 & 0.801 & 0.92 & 0.83 & 0.000 \\
PC & 0.3 & 0.408 & 0.466 & 0.835 & 0.594 & 3.00 & 3.29 & 0.006 \\
\midrule
\textbf{Our Method} (M=3) & 0.4 & 0.420 & 0.987 & 0.987 & \textbf{0.987} & \textbf{0.06} & \textbf{0.04} & 0.325 \\
Collision-as-influence & 0.4 & 0.420 & 0.912 & 0.737 & 0.807 & 0.79 & 0.56 & 0.000 \\
Temporal-precedence & 0.4 & 0.420 & 0.923 & 0.793 & 0.845 & 0.62 & 0.41 & 0.000 \\
PC & 0.4 & 0.420 & 0.453 & 0.762 & 0.563 & 3.00 & 3.39 & 0.006 \\
\midrule
\textbf{Our Method} (M=2) & 0.5 & 0.424 & 0.963 & 0.963 & \textbf{0.963} & \textbf{0.18} & \textbf{0.14} & 0.290 \\
Collision-as-influence & 0.5 & 0.424 & 0.835 & 0.663 & 0.732 & 1.01 & 0.60 & 0.000 \\
Temporal-precedence & 0.5 & 0.424 & 0.889 & 0.787 & 0.825 & 0.75 & 0.52 & 0.000 \\
PC & 0.5 & 0.424 & 0.430 & 0.729 & 0.535 & 3.00 & 3.58 & 0.006 \\
\bottomrule
\end{tabular}
\caption{Results for Minimal Chain (4 variables, 100 seeds, 16 samples)}
\label{tab:comparison_t1_0_1}
\end{table}

\begin{table}[h]
\centering
\begin{tabular}{lcccccccc}
\toprule
Method & $\Delta$ & $\hat q_{\min}$ & Precision & Recall & F1 & SHD & SSHD & Time (s) \\
\midrule
\textbf{Our Method} (M=1) & 0.1 & 0.457 & 0.996 & 0.995 & \textbf{0.995} & \textbf{0.08} & \textbf{0.07} & 0.325 \\
Collision-as-influence & 0.1 & 0.457 & 0.666 & 0.683 & 0.674 & 4.44 & 2.27 & 0.000 \\
Temporal-precedence & 0.1 & 0.457 & 0.729 & 0.744 & 0.736 & 3.79 & 2.23 & 0.000 \\
\midrule
\textbf{Our Method} (M=1) & 0.2 & 0.457 & 0.994 & 0.995 & \textbf{0.995} & \textbf{0.09} & \textbf{0.07} & 0.329 \\
Collision-as-influence & 0.2 & 0.457 & 0.637 & 0.674 & 0.655 & 4.86 & 2.60 & 0.000 \\
Temporal-precedence & 0.2 & 0.457 & 0.696 & 0.734 & 0.714 & 4.23 & 2.57 & 0.000 \\
\bottomrule
\end{tabular}
\caption{Results for Sequential Chain (11 variables, 100 seeds)}
\label{tab:comparison_t0_0_1}
\end{table}

\begin{table}[h]
\centering
\begin{tabular}{lcccccccc}
\toprule
Method & $\Delta$ & $\hat q_{\min}$ & Precision & Recall & F1 & SHD & SSHD & Time (s) \\
\midrule
\textbf{Our Method} (M=1) & 0.1 & 0.647 & 1.000 & 1.000 & \textbf{1.000} & \textbf{0.00} & \textbf{0.00} & 0.293 \\
Collision-as-influence & 0.1 & 0.647  & 0.870 & 0.588 & 0.701 & 5.00 & 4.47 & 0.000 \\
Temporal-precedence & 0.1 & 0.647  & 0.910 & 0.613 & 0.732 & 4.70 & 4.44 & 0.000 \\
\midrule
\textbf{Our Method} (M=1) & 0.2 & 0.647 & 1.000 & 1.000 & \textbf{1.000} & \textbf{0.00} & \textbf{0.00} & 0.291 \\
Collision-as-influence & 0.2 & 0.647 & 0.870 & 0.588 & 0.701 & 5.00 & 4.47 & 0.000 \\
Temporal-precedence & 0.2 & 0.647 & 0.908 & 0.620 & 0.736 & 4.72 & 4.54 & 0.000 \\
\midrule
\textbf{Our Method} (M=1) & 0.3 & 0.647 & 1.000 & 1.000 & \textbf{1.000} & \textbf{0.00} & \textbf{0.00} & 0.327 \\
Collision-as-influence & 0.3 & 0.647 & 0.853 & 0.594 & 0.699 & 5.16 & 4.69 & 0.000 \\
Temporal-precedence & 0.3 & 0.647 & 0.913 & 0.626 & 0.742 & 4.69 & 4.58 & 0.000 \\
\midrule
\textbf{Our Method} (M=2) & 0.4 & 0.622 & 0.999 & 0.999 & \textbf{0.999} & \textbf{0.02} & \textbf{0.02} & 0.295 \\
Collision-as-influence & 0.4 & 0.622 & 0.807 & 0.594 & 0.684 & 5.59 & 5.12 & 0.000 \\
Temporal-precedence & 0.4 & 0.622 & 0.808 & 0.620 & 0.699 & 5.73 & 5.55 & 0.000 \\
\bottomrule
\end{tabular}
\caption{Results for Parallel Triggers (12 variables, 100 seeds)}
\label{tab:comparison_t2_0_1}
\end{table}

\begin{table}[h]
\centering
\begin{tabular}{lcccccccc}
\toprule
Method & $\Delta$ & $\hat q_{\min}$ & Precision & Recall & F1 & SHD & SSHD & Time (s) \\
\midrule
\textbf{Our Method} (M=2) & 0.1 & 0.590 & 0.999 & 0.999 & \textbf{0.999} & \textbf{0.02} & \textbf{0.02} & 0.299 \\
Collision-as-influence & 0.1 & 0.590 & 0.694 & 0.634 & 0.663 & 6.35 & 4.96 & 0.000 \\
Temporal-precedence & 0.1 & 0.590 & 0.722 & 0.661 & 0.690 & 6.05 & 4.98 & 0.000 \\
PC & 0.1 & 0.590 & 0.154 & 1.000 & 0.267 & 12.00 & 66.00 & 7.099 \\
\midrule
\textbf{Our Method} (M=2) & 0.2 & 0.588 & 0.999 & 0.999 & \textbf{0.999} & \textbf{0.02} & \textbf{0.02} & 0.296 \\
Collision-as-influence & 0.2 & 0.588 & 0.691 & 0.633 & 0.661 & 6.40 & 5.00 & 0.000 \\
Temporal-precedence & 0.2 & 0.588 & 0.736 & 0.671 & 0.702 & 5.90 & 4.95 & 0.000 \\
PC & 0.2 & 0.588 & 0.153 & 0.994 & 0.265 & 12.00 & 66.07 & 7.158 \\
\bottomrule
\end{tabular}
\caption{Results for Intertwined Mechanisms (13 variables, 100 seeds)}
\label{tab:comparison_t3_0_1}
\end{table}

\begin{table}[h]
\centering
\begin{tabular}{lcccccccc}
\toprule
Method & $\Delta$ & $\hat q_{\min}$ & Precision & Recall & F1 & SHD & SSHD & Time (s) \\
\midrule
\textbf{Our Method} (M=1) & 0.1 & 0.465 & 1.000 & 1.000 & \textbf{1.000} & \textbf{0.00} & \textbf{0.00} & 0.337 \\
Collision-as-influence & 0.1 & 0.465 & 1.000 & 0.533 & 0.696 & 7.00 & 7.00 & 0.000 \\
Temporal-precedence & 0.1 & 0.465 & 1.000 & 0.533 & 0.696 & 7.00 & 7.00 & 0.000 \\
\midrule
\textbf{Our Method} (M=1) & 0.2 & 0.465 & 1.000 & 1.000 & \textbf{1.000} & \textbf{0.00} & \textbf{0.00} & 0.300 \\
Collision-as-influence & 0.2 & 0.465 & 1.000 & 0.533 & 0.696 & 7.00 & 7.00 & 0.000 \\
Temporal-precedence & 0.2 & 0.465 & 1.000 & 0.533 & 0.696 & 7.00 & 7.00 & 0.000 \\
\midrule
\textbf{Our Method} (M=1) & 0.3 & 0.465 & 1.000 & 1.000 & \textbf{1.000} & \textbf{0.00} & \textbf{0.00} & 0.303 \\
Collision-as-influence & 0.3 & 0.465& 1.000 & 0.533 & 0.696 & 7.00 & 7.00 & 0.000 \\
Temporal-precedence & 0.3 & 0.465 & 1.000 & 0.533 & 0.696 & 7.00 & 7.00 & 0.000 \\
\midrule
\textbf{Our Method} (M=2) & 0.4 & 0.458 & 0.999 & 0.999 & \textbf{0.999} & \textbf{0.03} & \textbf{0.03} & 0.297 \\
Collision-as-influence & 0.4 & 0.458 & 1.000 & 0.533 & 0.696 & 7.00 & 7.00 & 0.000 \\
Temporal-precedence & 0.4 & 0.458 & 0.888 & 0.622 & 0.726 & 7.00 & 7.00 & 0.000 \\
\bottomrule
\end{tabular}
\caption{Results for Linear Slot-Machine (16 variables, 100 seeds)}
\label{tab:comparison_t4_0_1}
\end{table}

\begin{table}[h]
\centering
\begin{tabular}{lcccccccc}
\toprule
Method & $\Delta$ & $\hat q_{\min}$ & Precision & Recall & F1 & SHD & SSHD & Time (s) \\
\midrule
\textbf{Our Method} (M=1) & 0.1 & 0.686 & 1.000 & 1.000 & \textbf{1.000} & \textbf{0.00} & \textbf{0.00} & 0.316 \\
Collision-as-influence & 0.1 & 0.686 & 1.000 & 0.522 & 0.686 & 11.00 & 11.00 & 0.000 \\
Temporal-precedence & 0.1 & 0.686 & 1.000 & 0.522 & 0.686 & 11.00 & 11.00 & 0.000 \\
\midrule
\textbf{Our Method} (M=1) & 0.2 & 0.686 & 1.000 & 1.000 & \textbf{1.000} & \textbf{0.00} & \textbf{0.00} & 0.297 \\
Collision-as-influence & 0.2 & 0.686 & 1.000 & 0.522 & 0.686 & 11.00 & 11.00 & 0.000 \\
Temporal-precedence & 0.2 & 0.686 & 1.000 & 0.522 & 0.686 & 11.00 & 11.00 & 0.000 \\
\midrule
\textbf{Our Method} (M=1) & 0.3 & 0.686 & 1.000 & 1.000 & \textbf{1.000} & \textbf{0.00} & \textbf{0.00} & 0.297 \\
Collision-as-influence & 0.3 & 0.686 & 1.000 & 0.522 & 0.686 & 11.00 & 11.00 & 0.000 \\
Temporal-precedence & 0.3 & 0.686 & 1.000 & 0.522 & 0.686 & 11.00 & 11.00 & 0.000 \\
\midrule
\textbf{Our Method} (M=2) & 0.4 & 0.673 & 0.998 & 0.998 & \textbf{0.998} & \textbf{0.09} & \textbf{0.09} & 0.296 \\
Collision-as-influence & 0.4 & 0.673 & 1.000 & 0.522 & 0.686 & 11.00 & 11.00 & 0.000 \\
Temporal-precedence & 0.4 & 0.673 & 0.724 & 0.564 & 0.629 & 15.46 & 15.46 & 0.000 \\
\bottomrule
\end{tabular}
\caption{Results for Large Slot-Machine (24 variables, 100 seeds)}
\label{tab:comparison_t4_big_0_1}
\end{table}






\clearpage

\section{Additional Dataset Example: Parallel Triggers}
\label{app:dataset}

\begin{figure}[H]
\centering
\includegraphics[trim=0 50 480 0,clip,width=0.45\linewidth]{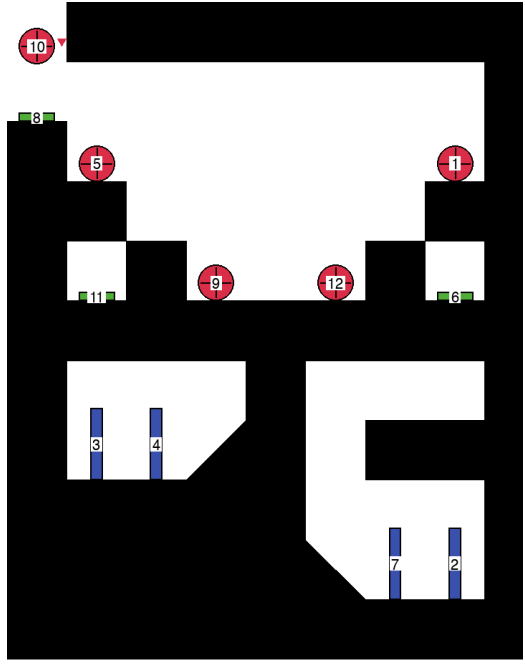}
\caption{
\textbf{Parallel Triggers.}
}
\label{fig:parallel_triggers_app}
\end{figure}

We provide a more detailed example of an interventional dataset (as defined in Section~\ref{sec:dataset}) for a chain-reaction system with \emph{parallel triggers}, illustrated in Figure~\ref{fig:parallel_triggers_app}. In this system, button 8 triggers two branches in parallel:
ball~5 presses button~11, releasing ball~12, while ball~1 presses button~6, releasing ball~9.
Buttons~11 and~6 are typically pressed at approximately the same time.
The causal structure of this system is
\[
10 \rightarrow 8,\quad
8 \rightarrow 5 \rightarrow 11 \rightarrow 12 \rightarrow 7 \rightarrow 2,\quad
8 \rightarrow 1 \rightarrow 6 \rightarrow 9 \rightarrow 4 \rightarrow 3.
\]

\paragraph{Dataset structure.}
Each execution $e$ yields a pair $(I_e, X^{(e)})$,
where $I_e \in \{1,\dots,N\} \cup \{\varnothing\}$ denotes the intervened object
and $X^{(e)} \in \{0,1\}^N$ records which objects became active during the execution.

\paragraph{Example executions.}
A typical observational execution may produce an activation vector of the form
\[
(\varnothing,\; X^{(e)}) \quad\text{with}\quad
X^{(e)}_{11}=X^{(e)}_{6}=1,
\]
followed by activation of their respective downstream cascades (e.g., both balls 9 and 12 move).
Under a blocking intervention such as $\doop{X_{11}=0}$, the dataset instead contains samples of the form
\[
(11,\; X^{(e)}) \quad\text{with}\quad
X^{(e)}_{12}=X^{(e)}_{7}=X^{(e)}_{2}=0,
\]
while the parallel branch through button~6 will be active ($X^{(e)}_{9}=X^{(e)}_{4}=X^{(e)}_{3}=1$).
Analogously, $\doop{X_1=0}$ suppresses the branch releasing ball~9 while leaving the other branch intact.
This example illustrates how interventional datasets in chain-reaction systems capture
selective suppression of downstream activations under blocking interventions,
even in environments with simultaneous or parallel triggering events.

\paragraph{Example dataset.}
An example interventional dataset for this environment may contain samples such as
\[
\begin{aligned}
(\varnothing,\; X) &= (1,1,1,1,1,1,1,1,1,1,1,1), \\
(\varnothing,\; X) &= (1,0,1,1,1,1,1,1,1,1,1,1), \\
(11,\; X) &= (1,0,1,1,1,1,0,1,1,1,0,0), \\
(6,\; X) &= (1,1,0,0,1,0,1,1,0,1,1,1), \\
(10,\; X) &= (0,0,0,0,0,0,0,0,0,0,0,0),
\end{aligned}
\]
where each vector is ordered according to object indices $1$ through $12$. 
Note that in the second observational sample, the absence of activation of domino~2 may arise from stochastic failure (e.g., if domino~2 is placed too far during resetting, domino~7 may fail to reach it).

\end{document}